\theoremstyle{plain}
\newtheorem{theorem}{Theorem}
\newtheorem{proposition}[theorem]{Proposition}
\theoremstyle{definition}
\newtheorem{definition}[theorem]{Definition}
\theoremstyle{remark}
\DeclareMathOperator*{\argmin}{arg\,min}
\DeclareMathOperator*{\trace}{tr}
\DeclareMathOperator*{\covv}{cov}
\newcommand{\reals}{\mathbb{R}}
\newcommand{\maxn}[1][X]{\|#1\|_{\mathrm{max}}}
\newcommand{\twopartdef}[4]
{\left\{
    \begin{array}{ll}
      #1 & \mbox{if } #2 \\
      #3 & \mbox{ } #4
    \end{array}
  \right.
}
\newcommand{\trs}[1][x]{{#1}^{\mathrm{T}}}
\newcommand{\grp}{G}%
\newcommand{\gre}{E}%
\DeclareMathOperator{\pa}{\mathrm{Pa}}
\DeclareMathOperator{\chs}{\mathrm{Ch}}
\newcommand{\dom}[1][d]{\reals^{#1\times #1}} 
\DeclareMathOperator{\degr}{deg}  
\DeclareMathOperator{\nnz}{nnz}
\def\X{\mbox{\bf X}}
\def\mbo{\mbox{\bf MB}}
\def\pcs{\mbox{\bf PC}}
\def\pss{\mbox{\bf P}}
\def\chs{\mbox{\bf Ch}}
\newcommand{\eg}{e.g.}
\newcommand{\ie}{i.e.}
\newcommand{\ind}{\perp\!\!\!\!\perp} 
\newcommand{\ttau}{\tau_{\mathrm{98}}} 
\newcommand{\comm}[1]{}
\newcommand{\checkk}[1]{}
\newcommand{\hcheck}[1]{} 
\definecolor{codegreen}{rgb}{0,0.6,0}
\definecolor{codegray}{rgb}{0.5,0.5,0.5}
\definecolor{codepurple}{rgb}{0.58,0,0.82}
\definecolor{backcolour}{rgb}{0.95,0.95,0.92}
\lstdefinestyle{mystyle}{
    commentstyle=\color{codegreen},
    keywordstyle=\color{magenta},
    numberstyle=\tiny\color{codegray},
    stringstyle=\color{codepurple},
    basicstyle=\ttfamily\footnotesize,
    breakatwhitespace=false,         
    breaklines=true,                 
    captionpos=b,                    
    keepspaces=true,                 
    numbers=left,                    
    numbersep=5pt,                  
    showspaces=false,                
    showstringspaces=false,
    showtabs=false,                  
    tabsize=2
}
\newcommand{\si}[1][]{\mbox{\bf S}_{#1}}
\newcommand{\xdata}{\mathcal{X}}
\def\XX{DCILP}
\newcommand{\phao}{Phase-1}
\newcommand{\phad}{Phase-2}
\newcommand{\phat}{Phase-3}
\newcommand{\PD}{Phase-2}
\newcommand{\PT}{Phase-3}
\newcommand{\HAI}{\widehat{A}^{(i)}}
\newcommand{\redl}[1]{#1}
\newcommand{\bleu}[1]{#1}
\title{\XX: A Distributed Approach for Large-Scale Causal Structure Learning}
\author {
    Shuyu Dong\textsuperscript{\rm 1}\equalcontrib,  
    Mich\`ele Sebag\textsuperscript{\rm 1}\equalcontrib,  
    Kento Uemura\textsuperscript{\rm 2},  
    Akito Fujii\textsuperscript{\rm 2},  
    Shuang Chang\textsuperscript{\rm 2},\\
    Yusuke~Koyanagi\textsuperscript{\rm 2},  
    Koji Maruhashi\textsuperscript{\rm 2} 
}
\begin{document}

\maketitle

\begin{abstract}
Causal learning tackles the computationally demanding task of estimating causal graphs. This paper introduces a new divide-and-conquer approach for causal graph learning, called \XX. In the divide phase, the Markov blanket $\mbo(X_i)$ of each variable $X_i$ is identified, and causal learning subproblems associated with each  $\mbo(X_i)$ are independently addressed in parallel. This approach benefits from a more favorable ratio between the number of data samples and the number of variables considered. In counterpart, it can be adversely affected by the presence of hidden confounders, as variables external to $\mbo(X_i)$ might influence those within it.
The reconciliation of the local causal graphs generated during the divide phase is a challenging combinatorial optimization problem, especially in large-scale applications. 
The main novelty of \XX\ is an original formulation of this reconciliation as an integer linear programming (ILP) problem, which can be delegated and efficiently handled by an ILP solver. 
Through experiments on medium to large scale graphs, and comparisons with state-of-the-art methods, \XX\ demonstrates significant improvements in terms of computational complexity, while preserving the learning accuracy on real-world problem and suffering at most a slight loss of accuracy on synthetic problems.
\end{abstract}

%
\begin{links}
    \link{Code}{https://github.com/shuyu-d/dcilp-exp}
    \link{Extended version}{https://arxiv.org/abs/2406.10481}
\end{links}

\section{Introduction}\label{sec1}
Discovering causal relations from observational data emerges as an important problem for artificial intelligence with fundamental and practical motivations~\citep{Pearl2000,peters2017elements-brief}. 
One notable reason is that causal models support modes of reasoning, \eg, counterfactual reasoning and algorithmic recourse \citep{WshopAlgorithmicRecourseICML21}, that are beyond the reach of correlation-based models \citep{peters2016causal,arjovsky2019invariant,Sauer2021ICLR}.
In the literature of causal discovery and Bayesian network learning, there are two main categories of methods, namely constraint-based methods \citep{spirtes2000causation,meek1995causal} and score function-based methods \citep{chickering2002learning,loh2014high} 
(more in Section \ref{sec:SoA}).
Depending on the specific method, strategies for learning large causal graphs include restricting the search space of directed graphs to that of sparse graphs~\citep{ramsey2017million,loh2014high}, or transforming the underlying combinatorial problem into a continuous optimization problem \citep{NEURIPS2018_e347c514,aragam2019globally,ng2020role,ng2021reliable,lopez2022large}. While these strategies have resulted in significant improvements in reducing the complexity, their scalability is still limited when the number of variables and/or the degree of the sought causal graph are high.

To better tackle the computational challenges in learning large causal structures, a growing number of works consider breaking down the large-scale causal discovery problem into smaller ones 
defined from subsets of variables, and conducting a divide-and-conquer strategy. 
Such subsets of variables might be incrementally built and refined \citep{gao2017local}; they can be based on hierarchical clustering \citep{gu2020learning}, recursive decomposition based on conditional independence tests \citep{zhang2020learning}, or through the Markov blanket (defined in \Cref{sec:formal}) associated with each variable \citep{tsamardinos2003algorithms,wu2020accurate,wu2022multi,wu2023practical,mokhtarian2021recursive}. A main challenge lies in the conquer step for the fusion or reconciliation of the partial solutions identified in the divide step; most conquer approaches are rule-based which limits their applicability. 

In this paper we present the original {\em Divide-and-Conquer causal modelling with Integer Linear programming}  approach (\XX) to address the scalability challenge inherent to causal discovery. Formally, 
\XX\ consists of three phases: 
\begin{itemize}
    \item 
\phao\ estimates the Markov blankets  $\mbo(X_i)$ associated with each observed variable $X_i$, that are used to divide the causal discovery problem into smaller subproblems; \item \phad\ independently tackles each causal subproblem defined by the variables in  $\mbo(X_i)$. Note that these causal subproblems offer a more favorable ratio between the number of samples and the number of variables; in counterpart, they may no longer satisfy the assumption of causal sufficiency (section \ref{sec:formal}) --- as variables external to a Markov blanket might act as hidden confounders, having an impact on the variables within it; 
\item \phat\ is a conquer phase, reconciling the causal relations identified for each subproblem into a globally consistent causal graph. \end{itemize}
 
The original contribution of the proposed \XX\ is twofold. 
Firstly, \phad\ is parallelizable by design; the fact that it can handle the causal discovery subproblems associated with each Markov blanket, allows it to scale up to a few thousand variables. The causal insufficiency issue is mitigated by only retaining the causal relations involving the center variable of the Markov Blanket. 
Secondly, and most importantly, we show that the reconciliation of the causal subgraphs learned in \phad\ can be formulated and efficiently solved as an integer linear programming (ILP) problem.
Binary ILP variables are defined to represent the causal relations (causes, effects, spouses, and v-structures); logical constraints are defined to
enforce their consistency, and the optimization of the ILP variables aims to find a causal graph as close as possible to 
the concatenation of all local subgraphs, subject to the consistency constraints.
The resolution of this ILP problem can be delegated to highly efficient ILP solvers. 

%
Overall, \XX\ defines a flexible framework where different algorithmic components can be used in each phase: 
%
(i) For the Markov blanket discovery task in \phao, we restrict ourselves to linear structural equation models (\Cref{sec:formal}) following the setting of \cite{loh2014high};\footnote{The case of non-linear causal relations is left for further work, using for instance feature selection to elicit Markov blankets \citep{tsamardinos2003algorithms}.} 
(ii) for the causal discovery subproblems in \phad, we consider GES~\citep{chickering2002optimal} and DAGMA \citep{bello2022dagma} as they are two representative and efficient state-of-the-art algorithms for causal modeling; %
(iii) for \phat, we use the Gurobi ILP solver \citep{gurobi}. 

The paper is organized as follows. After presenting the formal background in \Cref{sec:formal}, we describe \XX\ in \Cref{sec:main}. Section \ref{sec:exp} %
presents the experimental setting and results of \XX. %
Section \ref{sec:SoA} discusses the position of \XX\ with respect to the related work. Section \ref{sec:conclu} concludes the paper and presents some perspectives for further work.
\section{Formal Background} 
\label{sec:formal} 

\begin{definition}
Let $\X = (X_1,\dots,X_d)$ be a set of $d$ observed random variables. The linear Structural Equation Model (SEM) of $\X$  is a set of $d$ equations:
\begin{eqnarray*}
\forall 1 \le i \le d, & X_i  =  \beta_{1,i} X_1 + \dots + \beta_{d,i} X_d + \epsilon_i\nonumber     
\end{eqnarray*}
with $\epsilon_i$ an external random variable, independent of any $X_j$ for $j \neq i$.
Coefficient $\beta_{i,i} = 0$; at most one of $\beta_{i,j}$ and $\beta_{j,i}$ is nonzero. 
If $\beta_{j,i} \neq 0$, $X_j$ is said to be a cause, or parent, of $X_i$; $X_i$ is said to be an effect of $X_j$.
\end{definition}

The graph $\grp:=(\X,\gre)$ with adjacency matrix $B =
(\beta_{i,j})$, referred to as causal graph, is the directed graph such that the edge set $\gre$ is the set of
pairs $(i,j)$ with $\beta_{i,j} \neq 0$. The directed graph of a linear SEM is usually
required to be acyclic (DAG).

\begin{definition}
Consider a DAG $\grp=(\X, B)$ defined on $\X=(X_1,\dots,X_d)$. 
The Markov blanket of variable $X_i$, denoted as $\mbo(X_i)$, is the smallest set $M \subset \X$ such that %
$$ X \ind_{\grp}  \X \backslash (M \cup \{X_i\}) \text{~given~} M,$$
where $\ind_{\grp}$ denotes $d$-separation \citep[Definition~6.1, Definition~6.26]{peters2017elements-brief}. 
\end{definition}
Causal learning commonly involves two assumptions. The {\em Markov property} states that the joint distribution of $\X$ exactly reflects the dependencies and independence relations in causal graph $G$, implying that 
$P(\X) = \prod_{i=1}^d P(X_i | \pa(X_i))$ 
with $\pa(X_i)$ 
denoting the set of variables parent of $X_i$ in $G$. 
The {\em Markov sufficiency} assumption states that all confounders (variables causes of at least two observed variables) are also observed. 

Under these assumptions, 
the Markov blanket $\mbo(X_i)$ is the smallest set $M$ such that $X_i$ is independent of all other variables in $\X$ given $M$; 
and $\mbo(X_i)$ contains 
exactly the variables $X_j$ that are causes or effects of $X_i$ (\ie,
$\beta_{j,i}\neq 0$ or $\beta_{i,j} \neq 0$) and the spouse variables $X_k$
(\ie, such that there exists a variable $X_\ell$ that is an effect of both $X_i$ and $X_k$). A triplet ($X_i, X_j, X_k$) form a v-structure \redl{if $X_i$ and $X_j$ are causes of $X_k$ while the first two are not directly linked.}

\def\SI{\mbox{\bf S}_{i}}
\def\SJ{\mbox{\bf S}_{j}}
\def\SK{\mbox{\bf S}_{k}}
\section{Overview of \XX} 
\label{sec:main}
After describing the divide-and-conquer strategy at the core of \XX, this section details the Integer Linear Programming approach used for the reconciliation of the local causal graphs. In the remainder of the paper, we assume the Markov property and causal sufficiency.

\subsection{Divide-and-Conquer Strategy}
As illustrated in Fig. \ref{fig:dc-illu}, \XX\ is a 3-phase process: 
\begin{itemize}
    \item \phao\ aims to identify the Markov blankets $\mbo(X_i)$ for each variable $X_i$ with $i\in[d]$. Under the assumption that it is accurately identified, $\mbo(X_i)$ contains all variables relevant to predict $X_i$ \citep{tsamardinos2003algorithms}, that is, the causes, effects and spouses of $X_i$. 
\item \phad\ tackles the local causal subproblems defined on $\SI:=\{X_i\} \, \cup \, \mbo(X_i)$, for $i\in[d]$. 
The restriction to the $\SI$s usually makes the causal subproblems much smaller than the overall causal discovery problem; in counterpart, they no longer satisfy the causal sufficiency assumption as variables in $\mbo(X_i)$ may be influenced by variables external to $\mbo(X_i)$. This issue will be partially mitigated by only retaining the causal relations involving $X_i$, i.e. the causes and effects of $X_i$. 
\item \phat\ tackles the building of a global causal graph based on the Markov blankets  and the cause-effect edges respectively found in \phao\ and \phad.
This task is formalized as an ILP problem. ILP binary variables are associated with the cause-effect, spouse and v-structure relations among the causal variables. ILP constraints express the consistency of these relations. Lastly, the ILP objective states that the overall causal graph should be aligned to the best extent with the local causal graphs, subject to the consistency constraints. 
\end{itemize}

\begin{figure}[htpb]
\centering
\raisebox{-0.5\height}{
\subfigure[]{\includegraphics[width=.061\textwidth]{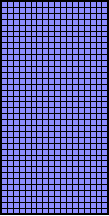}} 
}
\raisebox{-0.5\height}{
{\includegraphics[width=.028\textwidth]{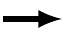}} 
}
\raisebox{-0.5\height}{
\subfigure[]{\includegraphics[width=.066\textwidth]{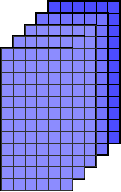}} 
}
\raisebox{-0.5\height}{
{\includegraphics[width=.028\textwidth]{figs/dc-ar.pdf}} 
}
\raisebox{-0.5\height}{
\subfigure[$\widehat{B}^{(i)}$]{\includegraphics[width=.074\textwidth]{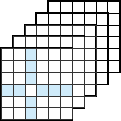}} 
}
\raisebox{-0.5\height}{
{\includegraphics[width=.028\textwidth]{figs/dc-ar.pdf}} 
}
\raisebox{-0.5\height}{
\subfigure[$B$]{\includegraphics[width=.074\textwidth]{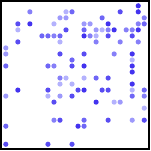}} 
}
\caption{\XX: %
(a) observational data; (b) data subsets on $\si[i]$; %
\bleu{(c) output matrix $\widehat{B}^{(i)}$ of \phad; (d) final solution $B$.}}
\label{fig:dc-illu}
\end{figure}

\XX\  (\Cref{alg:dc-ges}) defines a general framework, where different algorithms can be used in the three phases. \phao\ extracts the $\mbo(X_i)$s from the whole observational dataset $\cal D$. In \phad, the causal learning algorithm considers the local subproblems associated with the projection of $\cal D$ on the subset of variables $\si[i]$s. 
This phase can be parallelized, estimating the causal subgraphs related with the $\si[i]$ in an independent manner. Lastly, these relations are reconciled in \phat.

\begin{algorithm}[htpb] 
\caption{\XX } \label{alg:dc-ges}
\begin{algorithmic}[1]
\REQUIRE{Observational data $\xdata \in \reals^{n\times d}$   }
\STATE {\bf \textsc{(\phao)} Divide}: 
\begin{center}
    Estimate Markov blanket $\mbo(X_i)$ for $i\in[d]$ 
\end{center}
\STATE{\bf \textsc{(\phad)} for $i\in[d]$ do in parallel} 
\STATE \label{eq:localpb} \hspace{5mm} $A^{(i)} \leftarrow$ Causal discovery on $\si[i]:=\{X_i\}\,\cup\,\mbo(X_i)$ 
\STATE \label{eq:filt-ci} \hspace{5mm} $\widehat{B}^{(i)}_{j,k} \leftarrow A^{(i)}_{j,k}$ if $j=i$ or $k=i$, \text{~ and~} 0 otherwise \\
\STATE {\bf \textsc{(\phat)} Conquer:} 
\vspace{-2mm}
$$ B \leftarrow \text{Reconciliation from } \{\widehat{B}^{(i)}, i\in[d]\} \text{~via ILP} %
$$
\end{algorithmic} 
\end{algorithm}
\subsection{Conflicts Among Causal Subgraphs} 
The naive concatenation of the cause-effect relations found in \phad, given as: 
\begin{equation}
\label{eq:naive}
\widehat{B} = \sum_{i=1}^d \widehat{B}^{(i)}. 
\end{equation}
generally fails to be a DAG, because $\widehat{B}^{(i)}$ might include undirected edges, and because some conflicts might exist among different $\widehat{B}^{(i)}$s.\\

\begin{definition}
    [Conflicts among causal subgraphs]
    \label[definition]{defi:mergeco}
    Let $B^{(1)}$ and $B^{(2)}$ be defined as binary adjacency matrices among variables respectively ranging in $V_1$ and $V_2$, such that $V_1 \subset \mathbf{X}$ and $V_2\subset \mathbf{X}$. There exists a conflict among $B^{(1)}$ and $B^{(2)}$ if and only if (iff) there exists a pair of variables $(X_i,X_j)$ such that $\{X_i,X_j\} \subset V_1 \cap V_2$ with $B^{(1)}_{ij} \neq B^{(2)}_{ij}$. 
\end{definition}

\begin{proposition}%
    \label[proposition]{def:mergeco}
    For $i, j\in [d]$, let $\widehat{B}^{(i)}$  and $\widehat{B}^{(j)}$ respectively denote the $i$-th and $j$-th binary adjacency matrix output by \phad\ (\Cref{alg:dc-ges}, l. 4). 
    A merge conflict \redl{between} $\widehat{B}^{(i)}$  and $\widehat{B}^{(j)}$
    is one of the following three types:
    \begin{enumerate}
    \item {\em (Type-1)} One of the two adjacency matrices includes a directed edge between $X_i$ and $X_j$ while the other includes no edge ($X_i \ind X_j$). 
    
    \item {\em (Type-2)} The two adjacency matrices %
    include edges between $X_i$ and $X_j$ with opposite directions: either one matrix contains an undirected edge between $X_i$ and $X_j$ while the other includes no edge; or both matrices includes a directed edge with opposite direction. 
    
    \item {\em (Type-3)} One of the two adjacency matrices include an undirected edge between $X_i$ and $X_j$ while the other includes a directed edge. 
    \end{enumerate}
\end{proposition}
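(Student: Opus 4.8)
The plan is to show that the enumeration in \Cref{def:mergeco} is exhaustive by a straightforward case analysis on the possible edge states between $X_i$ and $X_j$ in each of the two adjacency matrices $\widehat{B}^{(i)}$ and $\widehat{B}^{(j)}$. First I would observe that, by construction (\Cref{alg:dc-ges}, l.~4), the only entries of $\widehat{B}^{(i)}$ that can be nonzero are those of the form $\widehat{B}^{(i)}_{i,k}$ or $\widehat{B}^{(i)}_{k,i}$; hence the \emph{only} pair of variables for which both $\widehat{B}^{(i)}$ and $\widehat{B}^{(j)}$ can simultaneously carry an edge is the pair $(X_i, X_j)$ itself. This reduces the conflict detection of \Cref{defi:mergeco} to inspecting the single unordered pair $\{X_i, X_j\}$, which is the key structural fact that makes the classification finite and short.

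Next I would enumerate the local edge status of $\{X_i,X_j\}$ in a single output matrix. For $\widehat{B}^{(i)}$, since $i$ is the center variable, the status is one of: (a) no edge; (b) directed $X_i \to X_j$; (c) directed $X_j \to X_i$; (d) undirected edge $X_i - X_j$ (the causal-discovery subroutine in \phad{} may return a CPDAG-type undirected edge). The same four possibilities hold for $\widehat{B}^{(j)}$. A conflict, per \Cref{defi:mergeco}, is exactly a disagreement of the corresponding adjacency entries, so I would cross-tabulate the $4\times 4$ pairs and discard the four diagonal (agreeing) cases. The remaining off-diagonal cases group naturally: \{no edge\} vs.\ \{directed\} is Type-1; \{directed $X_i\to X_j$\} vs.\ \{directed $X_j\to X_i$\}, and \{undirected\} vs.\ \{no edge\}, are the opposite-orientation disagreements collected under Type-2; and \{undirected\} vs.\ \{directed, either orientation\} is Type-3. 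Checking that every off-diagonal cell lands in exactly one of these three buckets completes the argument.

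The one point requiring care — and the main obstacle — is the phrasing of Type-2, which lumps together two superficially different disagreements (two opposite directed edges, versus an undirected edge against no edge). I would justify this by noting that as binary adjacency matrices these produce the same kind of entrywise inconsistency: in the ``undirected vs.\ none'' case the matrix with the undirected edge has both $(i,j)$ and $(j,i)$ entries equal to $1$ whereas the other has both equal to $0$, so the two directions are in conflict simultaneously — symmetrically to the ``two opposite directed edges'' case where one matrix has $(i,j)=1,(j,i)=0$ and the other has $(i,j)=0,(j,i)=1$. Framing the proof at the level of the ordered entries $(\widehat{B}^{(i)}_{i,j}, \widehat{B}^{(i)}_{j,i})$ versus $(\widehat{B}^{(j)}_{i,j}, \widehat{B}^{(j)}_{j,i})$, each ranging over $\{(0,0),(1,0),(0,1),(1,1)\}$, makes the three types exactly the equivalence classes of mismatched pairs under the relevant symmetry, and shows no fourth type can arise. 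I would close by remarking that this exhaustiveness is precisely what licenses \phat{} to encode all pairwise conflicts with finitely many ILP consistency constraints.
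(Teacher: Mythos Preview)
Your proposal is correct and follows essentially the same approach as the paper: first use the support restriction of $\widehat{B}^{(i)}$ (only row/column $i$ can be nonzero) to reduce any conflict to the two entries $(i,j)$ and $(j,i)$, and then exhaust the finitely many possibilities for the quadruplet $(\widehat{B}^{(i)}_{ij},\widehat{B}^{(i)}_{ji},\widehat{B}^{(j)}_{ij},\widehat{B}^{(j)}_{ji})$. The paper's presentation organizes the enumeration by the number of nonzeros in this quadruplet (one, two, or three nonzeros yielding Types~1, 2, 3 respectively), whereas you organize it as a $4\times4$ cross-table of edge states, but the content is identical.
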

The proof is given in \cite[{Appendix~B}]{ArxivUS}.

As said, only the causal relations involving $X_i$ are retained in $\widehat{B}^{(i)}$ (\Cref{alg:dc-ges}). This restriction considerably reduces the set of conflicts compared to retaining all causal relations in $\HAI$ (\Cref{alg:dc-ges}, line 3).

When there exists a conflict between any pair of local solutions, and depending on its type, the naive concatenation $\widehat{B}$ (Eq. \ref{eq:naive}) is weighted accordingly; details in \cite[{Appendix~C.2}]{ArxivUS}.

\subsection{Reconciling Causal Subgraphs Through ILP}
\label{sssec:conquer}
\label{parag:ilp}
The reconciliation of the $\widehat{B}^{(i)}$s found in  \phad\ to form a consistent causal graph is formulated as an integer linear programming problem~\citep{wolsey2020integer}. 
Let us denote $B_{ij}$ the binary constraint variable representing the causal relation $X_i \to X_j$; let similarly $S_{ij}$ denote the binary constraint variable representing the spouse relation between $X_i$ and $X_j$. Finally, binary constraint variable $V_{ijk}$ represent the v-structure $X_i \to X_k \leftarrow X_j$ among  causal variables $X_i$, $X_j$ and $X_k$: 
\begin{align*}
    B_{ij}  = 1  & \text{~iff~}  X_i \to X_j \\ 
    V_{ijk} = V_{jik} = 1 & \text{~iff there is a v-structure $(X_i$ \hspace{-0.5mm}{\footnotesize $\to$} $X_k$ {\footnotesize$\leftarrow$} \hspace{-0.5mm}$X_j)$ } \\
    S_{ij}  = S_{ji} = 1  & \text{~iff $X_i$ and $X_j$ are spouses: $\exists k$, $V_{ijk} = 1$}. 
\end{align*}

\def\mbi{$\mbo(X_i)$}
\def\mbj{$\mbo(X_j)$}
\def\mbk{$\mbo(X_k)$}
Let variables $B_{ij}, S_{ij}$ and $V_{ijk}$ be referred to as ILP variables for clarity in the following. 
The ILP variables are created if and only if they are consistent with the Markov blankets, that is, $B_{ij}$ and $S_{ij}$ are created if $X_i \in \mbo(X_j)$ and $X_j \in \mbo(X_i)$; and $V_{ijk}$ is created if 
$X_i \in \mbo(X_j) \cap \mbo(X_k)$, $X_j \in \mbo(X_i) \cap \mbo(X_k)$, $X_k \in \mbo(X_i) \cap \mbo(X_j)$. 

The following constraints defined on the ILP variables express the logical relations among %
Markov blankets and v-structures, for all $i,j,k$ such that $i\neq j$, $j\neq k$, $k\neq i$:  
\begin{align}
& B_{ij} + B_{ji} + S_{ij} \geq 1  \hspace*{.19in}  \text{~if~} \{i,j\}\subset (\si[i]\cap\si[j]) \label{eq:mb} \\ %
& V_{ijk} \leq B_{ik}, V_{ijk} \leq B_{jk} \hspace*{.07in} \text{if~} \{i,j,k\} \subset (\si[i] \cap \si[j] \cap \si[k])  \label{eq:v-str1} \\
& B_{ik} + B_{jk} \leq 1 + V_{ijk} \hspace*{.12in} 
\text{~if~} \{i,j,k\} \subset (\si[i] \cap \si[j] \cap \si[k])
\label{eq:v-str3}\\
& V_{ijk} \leq S_{ij}  \hspace*{.78in} 
\text{~if~} \{i,j,k\} \subset (\si[i] \cap \si[j] \cap \si[k])
\label{eq:v-str2} \\ 
& S_{ij} \leq \sum_{k} V_{ijk} \hspace*{.56in} 
\text{~if~} \{i,j,k\} \subset (\si[i] \cap \si[j] \cap \si[k]) 
\label{eq:v-str4}
\end{align}

\begin{proposition}%
    \label[proposition]{def:co-MB}
With same notations as above, under the assumption that Markov blanket \mbi\ is true for all $i\in[d]$, the ILP variables associated with the sought causal graph $B$ satisfy all linear constraints \eqref{eq:mb}--\eqref{eq:v-str4}. 
\end{proposition}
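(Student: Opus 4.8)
The plan is to take the true causal DAG $B$ (so that $B_{ij}=1$ encodes exactly $X_i\to X_j$), to set $S_{ij}=1$ exactly when $X_i$ and $X_j$ have a common child in $B$, and $V_{ijk}=1$ exactly when $X_i$ and $X_j$ are both parents of $X_k$ in $B$ --- these are precisely the ``ILP variables associated with $B$'' --- and then to verify the five families \eqref{eq:mb}--\eqref{eq:v-str4} one at a time. The only structural fact I will use is the characterization recalled in \Cref{sec:formal}: under the Markov property and causal sufficiency, $\mbo(X_i)$ consists exactly of the causes, the effects, and the spouses of $X_i$. I will also use the immediate consequences that, for $j\neq i$, $j\in\si[i]$ is equivalent to $X_j\in\mbo(X_i)$, and that the guard $\{i,j,k\}\subset\si[i]\cap\si[j]\cap\si[k]$ on constraints \eqref{eq:v-str1}--\eqref{eq:v-str4} is exactly the ``created iff consistent with the Markov blankets'' rule for $V_{ijk}$, so that whenever such a constraint is imposed, the variables it mentions all exist.

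With this in hand, each check is a short case split on the left-hand side. For \eqref{eq:mb}, the hypothesis $\{i,j\}\subset\si[i]\cap\si[j]$ gives $X_j\in\mbo(X_i)$, hence $X_j$ is a cause, an effect, or a spouse of $X_i$, i.e. $B_{ji}=1$, $B_{ij}=1$, or $S_{ij}=1$, so $B_{ij}+B_{ji}+S_{ij}\geq 1$. For \eqref{eq:v-str1}, if $V_{ijk}=1$ then $X_i\to X_k$ and $X_j\to X_k$ in $B$, so $B_{ik}=B_{jk}=1\geq V_{ijk}$, and if $V_{ijk}=0$ the inequalities hold trivially since $B_{ik},B_{jk}\geq 0$. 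For \eqref{eq:v-str3}, a violation would require $B_{ik}=B_{jk}=1$ and $V_{ijk}=0$; but $B_{ik}=B_{jk}=1$ says $X_i$ and $X_j$ are both parents of $X_k$, which is exactly the event $V_{ijk}=1$, a contradiction. For \eqref{eq:v-str2}, $V_{ijk}=1$ makes $X_k$ a common child of $X_i$ and $X_j$, hence they are spouses and $S_{ij}=1$. For \eqref{eq:v-str4}, if $S_{ij}=1$ pick a common child $X_\ell$ of $X_i$ and $X_j$; then $X_\ell\in\mbo(X_i)\cap\mbo(X_j)$, $X_i,X_j\in\mbo(X_\ell)$, and (being spouses) $X_i\in\mbo(X_j)$, $X_j\in\mbo(X_i)$, so $\{i,j,\ell\}\subset\si[i]\cap\si[j]\cap\si[\ell]$, $V_{ij\ell}$ is among the created variables, and $V_{ij\ell}=1$; therefore $\sum_k V_{ijk}\geq 1=S_{ij}$.

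I expect the only delicate point to be precisely this bookkeeping between ``a constraint is active'' and ``the variables in it are instantiated with value $1$'' --- in particular the converse direction used in \eqref{eq:v-str4}, where one must exhibit a \emph{created} witness $V_{ij\ell}$, which forces a careful matching of the index-set guards with the variable-creation rule. A second, cosmetic point worth stating up front is that throughout we read $V_{ijk}$ as the collider pattern ``$X_i$ and $X_j$ both point to $X_k$'' (equivalently, the spouse-witness tied together by \eqref{eq:v-str2}--\eqref{eq:v-str4}); with that reading no unshieldedness hypothesis is involved and \eqref{eq:v-str3} is consistent with the true DAG. Everything else is immediate, and beyond what is already folded into the Markov-blanket characterization no separate acyclicity argument is needed.
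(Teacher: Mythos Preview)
Your proof is correct and follows the same constraint-by-constraint verification as the paper's own proof, though you are considerably more thorough: the paper's argument is a three-line sketch that neither tracks the variable-creation guards you handle carefully for \eqref{eq:v-str4} nor comments on the collider-versus-v-structure reading of $V_{ijk}$. Your explicit note that $V_{ijk}$ must be read as the bare collider pattern (both parents of $X_k$, with no unshieldedness requirement) for \eqref{eq:v-str3} to hold on the true DAG is in fact more careful than the paper, which defines ``v-structure'' with the non-adjacency clause in \Cref{sec:formal} but silently drops it in the ILP encoding.
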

\begin{proof}
Constraint \eqref{eq:mb} expresses that if $X_j$ belongs to \mbi\ it is either its cause, or its effect, or a spouse.
The fact that $X_i \to X_k \leftarrow X_j$ is equivalent to constraints \eqref{eq:v-str1}, \eqref{eq:v-str2}, \eqref{eq:v-str3} 
results 
from the truth table of the involved ILP variables.  Constraint \eqref{eq:v-str4} expresses that if $X_i$ and $X_k$ are spouses, then there exists at least one $k$ such that $i,j,k$ forms a v-structure $X_i \to X_k \leftarrow X_j$.    
\end{proof}

The reconciliation of the causal subgraphs $\widehat{B}^{(i)}$ for $i \in [d]$ is formulated as the following ILP problem which searches $(B,S,V)$ for $B$ to be aligned with the naive concatenation $\widehat{B}$ of the causal subgraphs (Eq. \ref{eq:naive}) subject to the consistency constraints:
\begin{align} 
\label{prog:the-ilp}
    \max_{B,S,V} \braket{\widehat{B}, B} \text{ subject to constraints \eqref{eq:mb}--\eqref{eq:v-str4}}.
\end{align}

\subsection{Refining the ILP Formulation}
\label{ssec:ilp-formu}
For tractability, we further leverage the output $\widehat{B}^{(i)}$s of \phad, by imposing $B_{ij} = 0$ if 
no causal relations between $X_i$ and $X_j$ are found in  $\widehat{B}^{(i)}$ or $\widehat{B}^{(j)}$:
\begin{equation}
 B_{ij} =0 \quad \mbox{~if~} \quad \widehat{B}^{(i)}_{ij} = \widehat{B}^{(i)}_{ji} =\widehat{B}^{(j)}_{ij} = \widehat{B}^{(j)}_{ji} = 0 
 \label{eq:bb}
\end{equation}
(\ie, if $\widehat{B}_{ij} = \widehat{B}_{ji} = 0$ in Eq. \ref{eq:naive}). 
The %
constraints (\ref{eq:bb}) and (\ref{eq:v-str1}) imply that, for all $i,j,k$ mutually distinct: 
\begin{align}
\label{eq:vij-zeros}
 V_{ijk}=0 \quad \text{if } \widehat{B}_{ik}=0 \text{ or } \widehat{B}_{jk} = 0,
\end{align}
which further reduce the set of the ILP variables. 
Lastly, we add constraints to forbid 2-cycles: 
\begin{equation}
B_{ij} + B_{ji} \leq 1 \quad \text{if~} \{i,j\}\subset (\si[i]\cap\si[j]) \label{eq:2cyc}    
\end{equation}
Therefore \phat\ tackles the ILP problem (\ref{prog:the-ilp}) augmented with constraints \eqref{eq:bb}, \eqref{eq:vij-zeros} and \eqref{eq:2cyc}. 

We note that the ILP resolution selects by design the edge orientations consistent with the Markov blankets (Fig. \ref{fig:exa1et2}). \XX\ thus recovers some properties of the PC algorithm \citep{spirtes2000causation}, with the difference that it exploits the identification of the Markov blankets, as opposed to conditional independence tests.

\begin{figure}[htpb]
\centering
\subfigure[$\widehat{B}$]{\includegraphics[width=.14\textwidth]{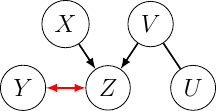} } 
\quad
\subfigure[$B$]{\includegraphics[width=.14\textwidth]{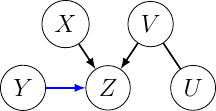} }
~ 
\subfigure[$B'$]{\includegraphics[width=.14\textwidth]{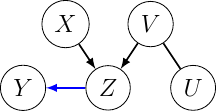} }
\caption{Consistency constraints based on Markov blankets information: if $Y$ belongs to $\mbo(V)$, the only feasible solution is $B$; otherwise it is $B'$.} 
\label{fig:exa1et2}
\end{figure}

The ILP resolution offers additional possibilities regarding the selection of the overall causal graph. 
In the implementation of \XX, Gurobi's ILP solver \citep{gurobi} is used for solving the ILP problem, and it enables returning a pool of ILP solutions attaining the same optimal value of the objective function. Then we select the solution with the best {\it DAGness value} 
based on \cite[Theorem~1]{NEURIPS2018_e347c514}: the DAGness of $B$ is defined as 
$h (B) = \trace(\exp(B\odot B))$ 
satisfying $h(B) = d$ iff $B$ is a DAG, and $h(B) > d$ otherwise. In the case where several ILP solutions have same $h$ value, the sparsest one (with fewer edges) is selected.

\subsection{Discussion}
\label{sec:discuss}
A main limitation of the presented framework is that errors in \phao\ (e.g. variables missing or spuriously added in the Markov blankets) adversely affect \phad\ and \phat. 
The impact of a missing variable $X_j$ in $\mbo(X_i)$ can be mitigated if $X_i$ is present in $\mbo(X_j)$;\footnote{\phao\ currently determines the Markov blankets from the precision matrix of $\X$, which is symmetric; see \cite[{Appendix C.1}]{ArxivUS}.} otherwise, the edge cannot be recovered. 
On the contrary, the presence of spurious variables in $\mbo(X_i)$ can to some extent be compensated for in \phad\ and \phat. 
Another limitation is that the ILP solution is not guaranteed to be a DAG; constraints (\ref{eq:2cyc}) only rule out 2-cycles. The ILP formulation however makes it possible to incrementally add new constraints, to rule out the cycles present in former solutions. The question is whether the incremental addition of such constraints yield a DAG solution in a (sufficiently) small number of iterations.
These limitations can be addressed by choosing appropriate algorithms involved in \phao\ or \phad. For instance, \phao\ currently exploits the precision matrix of $\X$ to identify the Markov blankets under the Gaussian linear SEM assumption. An alternative in non-linear cases is to leverage feature selection algorithms \citep{tsamardinos2003algorithms} to identify the smallest subset of variables needed to predict a given variable $X_i$.

Along the same line, the choice of the causal learning algorithm in \phad\ may depend on the prior knowledge about the data at hand, e.g., whether the causal variables are expected to satisfy the equal-variance assumption (EV case) or not (NV case). Some algorithms such as GES are more robust to the so-called varsortability bias~\cite{ReisachSW21} %
than others (e.g., NOTEARS and DAGMA \citep{bello2022dagma}).

\paragraph{Complexity.} In \phao, the identification of the Markov blankets is conducted under the assumption of Gaussian graphical models. %
Letting $\Sigma = \covv(\X)$ denote the covariance matrix of the data, and noting $\Theta = \Sigma^{-1}$ its inverse (the precision matrix), it follows from the Hammersley--Clifford theorem that the support of $\Theta_{i,:}$ is exactly $\SI = \mbo(X_i) \cup \{X_i\}$. 
Its complexity is bounded by $O(d^3)$, corresponding to the inversion of the symmetric positive definite matrix $\Sigma$. More advanced methods with reduced complexity can be used; see, \eg, \cite{hsieh2014quic}. 

In \phad, the causal subproblems, restricted to the subset of variables $\SI$, are solved independently for $i \in [d]$. Letting $d_{MB}$ denote the maximal size of the estimated Markov blankets, the computational complexity of \phad, considering that the subproblems are handled in parallel and using e.g. DAGMA \citep{bello2022dagma}, thus is $O(d_{MB}^3)$. %

In \phat, the number of ILP variables $B_{ij}$ and $S_{ij}$ is bounded by $O(d_{MB}d)$ with usually $d_{MB} \ll d$. The number of $V_{ijk}$ variables is bounded by $O(d_{MB}^2 d)$ in view of \eqref{eq:vij-zeros}. 
The ILP problem is NP-hard and becomes intractable when the number of variables and the size of the Markov blankets increases beyond a certain range. The observation from experimental results is that the computational gains in \phad\ (solving in parallel the $d$ subproblems as opposed to learning one causal graph of $d$ nodes) cancel out the computational cost of \phat, for $d$ ranging from medium to large-sized. 
\section{Experiments}
\label{sec:exp}
This section reports on the experimental validation of \XX, referring to the extended version~\cite{ArxivUS} for more details and complementary results. 
\def\DD{{\XX-dagma}} %
\def\DG{{\XX-ges}}
\subsection{Experimental Setting}

\paragraph{Goals.} The primary goal of the experiments is to evaluate the performance  of \XX\ according to the standard  SHD, TPR, FDR and FPR indicators for causal learning, together with its computational efficiency. 

A second goal is to assess how the causal learner used in \XX' \PD\ influences the overall performances. We report on the performances of \DG\ (respectively \DD), corresponding to \XX\ using GES (resp. DAGMA) during \phad. The choice of GES\footnote{We used the implementation from the R package \texttt{pcalg} (\url{https://cran.r-project.org/web/packages/pcalg/index.html}) , which is a most efficient version of GES to our knowledge.} \citep{chickering2002optimal} and DAGMA \citep{bello2022dagma}, %
known as a notable refinement of NOTEARS~\citep{NEURIPS2018_e347c514}, is used as they are two representative state-of-the-art causal learning methods using different techniques: GES is a greedy search method (optimal in the large sample limit) for finding the CPDAG while DAGMA is an efficient continuous optimization method for learning causal DAGs. 
\DG\ and \DD\ are assessed against the GES and DAGMA baselines; GOLEM~\citep{ng2020role} and DAS~\citep{montagna2023scalable} are also used for comparison.\footnote{Results of DAS are reported from \citep{montagna2023scalable}.} We also examine the impact of the \phao\ performance on the overall result, by considering the \XX\ (MB*) variants where the ground truth Markov blankets are supplied to \PD.

\paragraph{Benchmarks.}
Following \cite{NEURIPS2018_e347c514}, we consider synthetic and real-world datasets. The synthetic observational datasets are generated from linear SEMs, where the causal graph $B^\star$ is drawn from random DAGs with an Erd\H{o}s--R\'enyi (ER) 
 or scale-free (SF) model. The number $d$ of variables ranges in $\{$50, 100, 200, 400, 800, 1000, 1600$\}$. The number $n$ of samples is set to 
$5\times, 10\times$ or $50\times d$.
The real-world dataset is generated using the so-called 
MUNIN model~\citep{gu2020learning,munin-ref}, 
which is a Bayesian network with $d=$ 1,041 nodes that models a medical expert system based on electromyographs (EMG) to assist diagnosis of neuromuscular disorders. 
Datasets are generated according to the considered graph $B^\star$, with coefficients (edge weights) drawn from the uniform distribution $\text{Unif}([-2,-0.5]\cup [0.5, 2])$. The noise variables of the SEM data %
are from either Gaussian, Gumbel or uniform distribution. All reported results are median values from 
3 to 5 runs with different random seeds.

\paragraph{Computational environment.}
The parallelization of \XX' \PD\ is conducted on a cluster via the SLURM scheduler; each experiment uses at most 400 CPU cores. For $d \le 200$, \PD\ is distributed among $2d$ CPU cores; each local causal problem is handled on 2 cores. For higher values of $d$, the parallelization of \PD\ %
is then limited by congestion. 
The baseline algorithms and \XX' \phao\ run on one CPU core, and \XX' \PT\ runs on four CPU cores; CPU specifications are detailed in \cite[{Appendix~D.2}]{ArxivUS}. %

\subsection{Results on Synthetic Graphs}
\label{ssec:exp-scala}

\paragraph{\DG.}
\Cref{fig:exp1-dcges} and \Cref{tab:er1-withdas} illustrate the performances of \XX-ges and \DG~(MB*) %
on the ER2 data with $n=50d$ and on ER1 data with $n=10d$ respectively, 
in comparison with GES and the other baselines. %

On these problems, \XX\ outperforms GES, particularly in terms of FDR and computational time, except for small problem dimensions ($d \le 200$), as GES is very fast on smaller problems. \DG\ and \DG\ (MB*) yield identical results, as might be expected given the favorable $n/d$ ratio. The computational time for \DG\ increases with $d$, though at a much slower rate than for GES. The distribution of computational time among the \XX\ phases shows that the majority of the time is spent in \PD, increasing with $d$.
 
\begin{figure}[htpb]
    \centering
    \includegraphics[width=.472\textwidth]{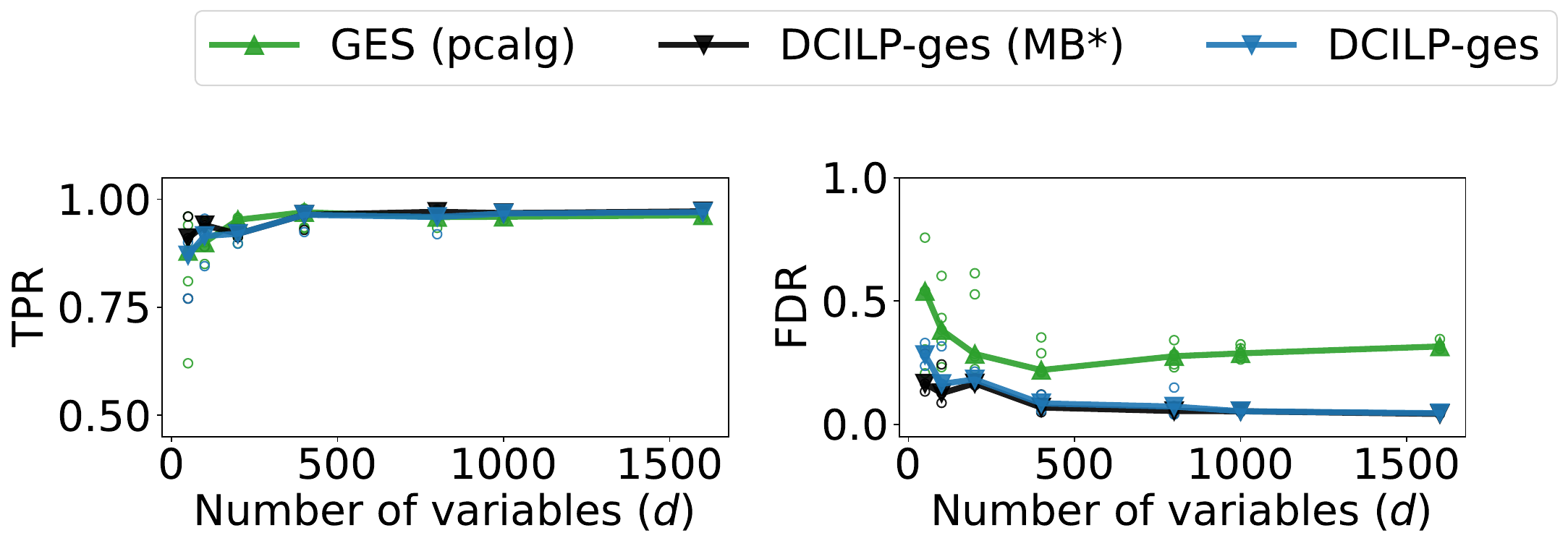}
    \caption{Learning accuracy of GES, \DG, and \DG\ (MB*) on ER2 data. }
    \label{fig:exp1-dcges}
\end{figure}

\DG\ achieves significant gains over GES in terms of DAG learning accuracy, which can be explained from two facts. Firstly, the causal learning subproblems in \XX' \PD\ enjoy a higher {\it effective sample size}, in the sense that the ratio $n /|\mbo(X_i)|$ is  higher than for the global problem. Secondly, the reconciliation achieved by \phat\ among the causal subgraphs tends to remove spurious directions and it avoids 2-cycles, while GES focuses on producing a CPDAG and might thus retain undirected edges.  

Fig. \ref{fig:runtimes} suggests that the speed-up achieved by \DG\ over GES increases for $d \ge 400$. For $d=1600$, \DG\ achieves more than $5\times$ speedup over GES on ER2 data. %

\begin{table*}[ht]
\small
\centering
\begin{tabular}{c|l||ccccr}
\hline
     $d$                                                                              & Algorithm            &  TPR        & FDR         &  SHD           & \# of edges      & time (sec)                                \\ 
\hline
\multicolumn{1}{c|}{\multirow{5}{*}{400}}     & GOLEM               & 0.987    	& 0.025   	& 46 {$\pm$ 14.47}   	& 1208          & 4437.526      \\ %
\multicolumn{1}{c|}{                   }    & GES                 & 0.907    	& 0.471   	& 1029 {$\pm$ 237.57}   	& 2048          & 1330.314        \\ 
\multicolumn{1}{c|}{                   }    & DAGMA               & 0.999    	& 0.003   	& 3 {$\pm$ 1.53}   	    & 1196          & 3997.719        \\ 
\multicolumn{1}{c|}{                   }    & DCILP-dagma (MB*)   & 0.964    	& 0.135   	& 208 {$\pm$ 24.43}   	& 1330        	& 1102.636        \\ 
\multicolumn{1}{c|}{                   }    & DCILP-dagma         & 0.925    	& 0.177   	& 322 {$\pm$ 68.83}   	& 1336        	& 974.538         \\ 
\hline
\multicolumn{1}{c|}{\multirow{5}{*}{800}}     & GOLEM               & 0.964    	& 0.049   	& 187 {$\pm$ 355.73}   	& 2426         & 25533.171   \\ %
\multicolumn{1}{c|}{                   }    & GES                 & 0.878    	& 0.518   	& 2444 {$\pm$ 1182.74}   & 4068         & 12063.320     \\ 
\multicolumn{1}{c|}{                   }    & DAGMA               & 0.999    	& 0.003   	& 7 {$\pm$ 8.66}   	    & 2398         & 22952.589     \\ 
\multicolumn{1}{c|}{                   }    & DCILP-dagma (MB*)   & 0.938    	& 0.250   	& 879 {$\pm$ 234.64}   	& 2994         & 4104.407      \\ 
\multicolumn{1}{c|}{                   }    & DCILP-dagma         & 0.923    	& 0.250   	& 914 {$\pm$ 191.78}   	& 2932         & 3405.801      \\ 
\hline
\end{tabular}
\caption{SF3 ($d \in \{400, 800\}, n/d = 20$): Comparative performance of GOLEM, DAGMA, GES, and \DD. \DD\ uses 400 CPU cores in \phad.} 
\label{tab:sf3-one}
\end{table*}

\paragraph{\bf \DD.}
\Cref{tab:sf3-one} and \Cref{tab:er1-withdas} show the performances of \XX-dagma and \DD\ (MB*) on SF3 data (for $n/d = 20$) and ER1 data (for $n/d = 10$) respectively, in comparison with DAGMA and the other baselines. %

We observe that DAGMA almost exactly recovers the underlying DAGs, with TPR and FDR respectively close to 1 and 0. \DD\ achieves significant gains over DAGMA in running time 
for all problem dimensions $d$,
with a moderate loss in learning accuracy (median TPR circa 0.9; median FDR around 0.2 on SF3 data and under 0.1 on ER1 data). %
Complementary results in the non-equal noise variance (NV) setting \cite{ReisachSW21} show however that \XX-ges is much more robust in the NV case than DAGMA (\Cref{fig:bplot-ev-nv}). More details are presented in the extended version~\cite{ArxivUS}. 

\begin{figure}[htpb]
    \centering
    \subfigure[DCILP-ges]{\includegraphics[width=.235\textwidth]{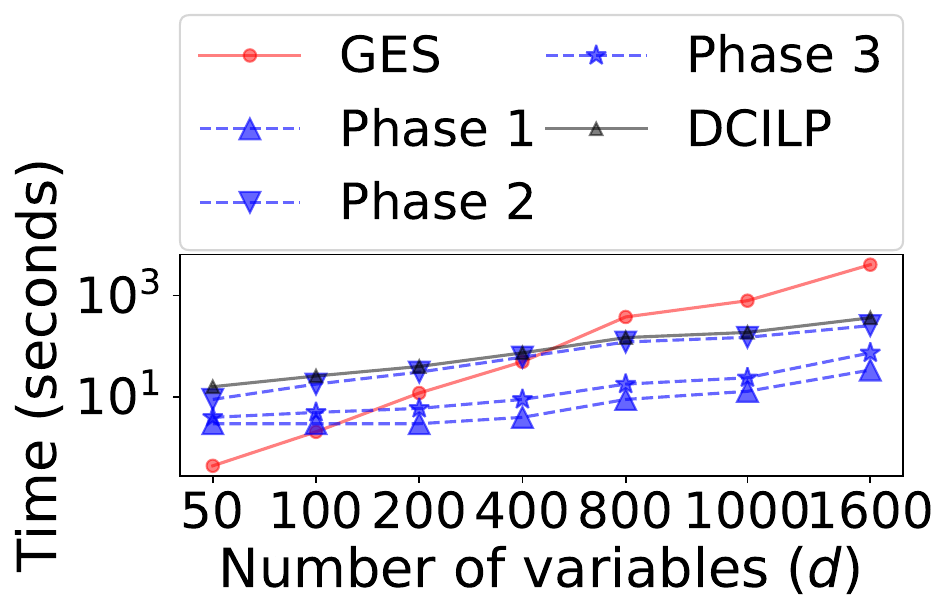}} \hspace{-1.5mm} %
    \subfigure[DCILP-dagma]{\includegraphics[width=.235\textwidth]{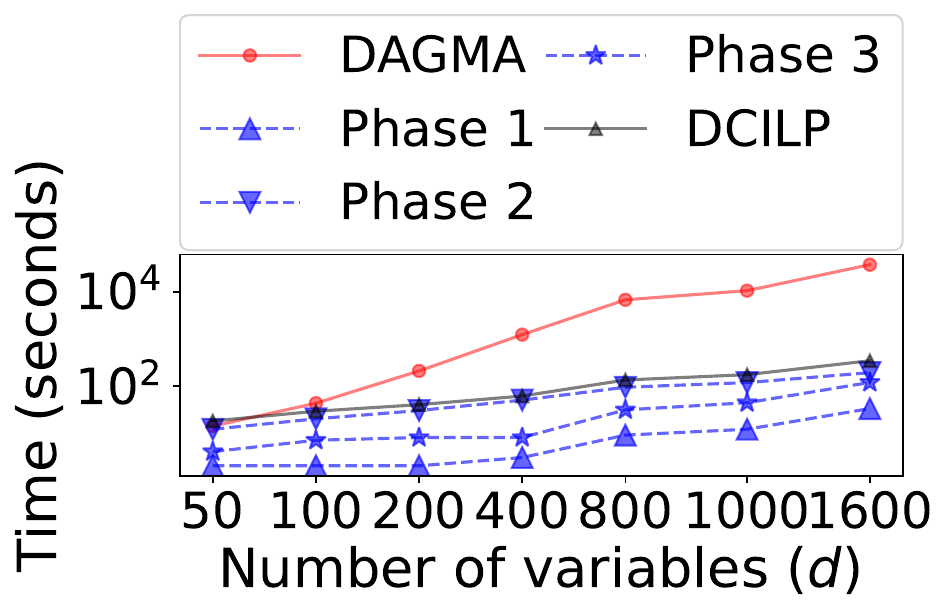}}
    \caption{Computation time of the 3 phases on ER2 data with different graph sizes ($d$). }
    \label{fig:runtimes}
\end{figure}

\begin{table}[htpb]
\small 
\centering
\setlength{\tabcolsep}{5.2pt} %
\begin{tabular}{c|l|ccr}
\hline
       $d$                                  & Algorithm           &  SHD                             & \#edges      & time (sec)         \\ 
\hline 
\multicolumn{1}{c|}{\multirow{6}{*}{200}}   & GOLEM           & 6.0 {$\pm$ 0.58}       & 194            & 686.28     \\ %
\multicolumn{1}{c|}{                    }   & DAS             & 110 {$\pm$  6.60}      & --                & 282.00       \\ %
\multicolumn{1}{c|}{                   }    & GES           	& 177.2 {$\pm$ 10.21}  	& 361            & 4.58        \\ 
\multicolumn{1}{c|}{                   }    & DCILP-ges     	& 53 {$\pm$ 16.17}   	& 243        	   	& 68.33        \\ 
\multicolumn{1}{c|}{                   }    & DAGMA         	& 1.6 {$\pm$ 1.67}   	& 200            & 173.93       \\ 
\multicolumn{1}{c|}{                   }    & DCILP-dagma   	& 15 {$\pm$ 5.57}   	    & 208        	   	& 59.00    \\ 
\hline
\multicolumn{1}{c|}{\multirow{6}{*}{1000}}   & GOLEM          & 31.0 {$\pm$ 8.19}       & 990             & 7885.99      \\ %
\multicolumn{1}{c|}{                     }   & DAS            & 994 {$\pm$ 15.00}      & --                & 5544.00      \\ %
\multicolumn{1}{c|}{                   }    & GES           	& 1521.8 {$\pm$ 51.3} 	& 2470          	& 629.30         \\ 
\multicolumn{1}{c|}{                   }    & DCILP-ges     	& 79 {$\pm$ 5.86}   	    & 1004        	    & 328.00         \\ 
\multicolumn{1}{c|}{                   }    & DAGMA         	& 3.40 {$\pm$ 2.88}   	& 1001          	& 8554.24         \\ 
\multicolumn{1}{c|}{                   }    & DCILP-dagma   	& 46 {$\pm$ 23.59}   	& 1014        	    & 272.67          \\ 
\hline
\end{tabular}
\caption{Results on ER1 data ($n/d = 10$): Comparative performance of DAS, DAGMA, GES, \DG\ and \DD. \DD\ and \DG\ use 400 CPU cores in \phad.}  
\label{tab:er1-withdas}
\end{table}

\begin{figure}[htpb]
    \begin{center}
        \includegraphics[width=.237\textwidth]{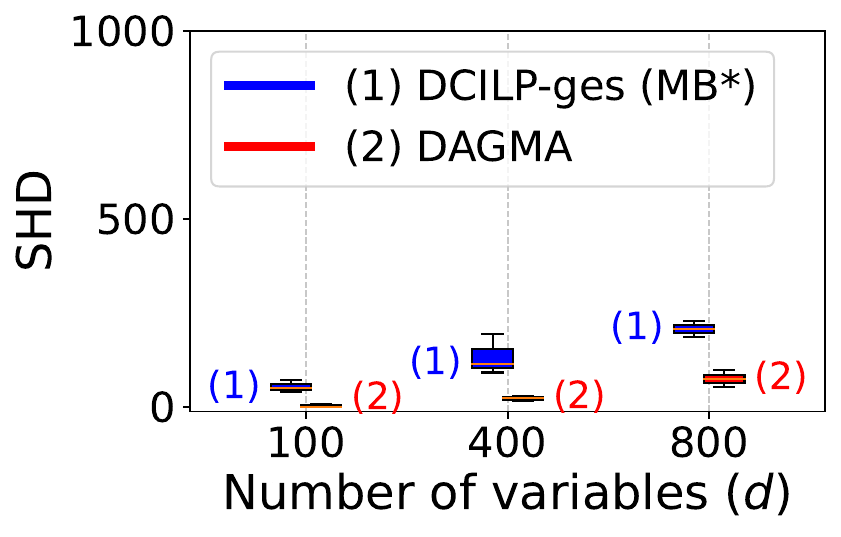} 
        \includegraphics[width=.231\textwidth]{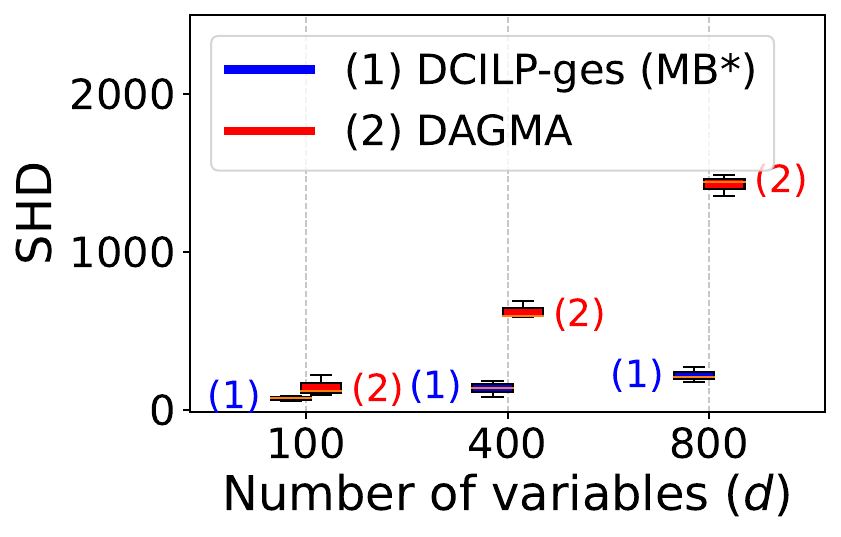} 
    \end{center}
    \caption{Results on ER2 data under the EV (left) and NV (right) settings.} 
    \label{fig:bplot-ev-nv}
\end{figure}

In all settings, the time efficiency gains of \DD\ over DAGMA are even more significant than for \DG\ compared to GES: 
\DD\ achieves an $100\times$ speed up %
over DAGMA for $d \ge 400$. As discussed in \Cref{sec:discuss}, such speedups are largely due to the parallelization of the local learning subproblems in \phad.\footnote{Despite the fact that the parallelization of \phad\
encounters congestion for large $d$ given the available CPU cores.} %

\subsection{Results on Real-World Graph} 

On MUNIN (Fig. \ref{fig:munin-u5}), \DD\ and \DG\ achieve significant reductions in running time compared to DAGMA (with a speed-up of circa 270 times) and compared to GES (with a speed-up of circa 25 times). 
At the same time, their learning accuracy is on par with that of DAGMA, and considerably better than for GES; the SHDs of \XX-dagma and \XX-ges rank first or second  along with DAGMA; the TPRs are similar for DAGMA, GES and \DD, and slightly inferior for \DG\ with uniform noise. 

Similar trends are observed for other $n/d$ ratios and noise types (details in \cite[{Appendix~D.6}]{ArxivUS}).

\begin{figure}[htpb]
\centering
\subfigure[Gaussian noise]{ 
   \includegraphics[width=.211\textwidth]{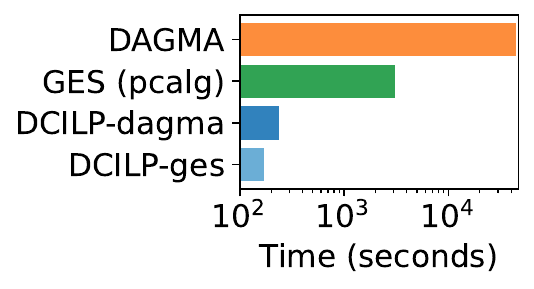} 
   \hspace{-4.2mm}
   \includegraphics[width=.133\textwidth]{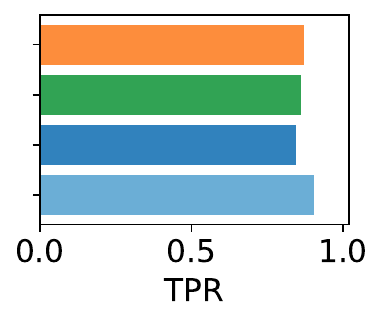}
   \hspace{-3.6mm}
   \includegraphics[width=.133\textwidth]{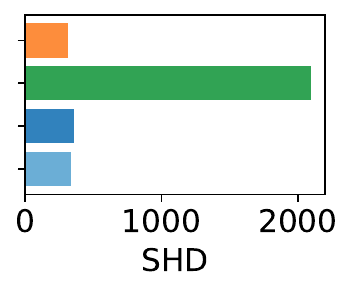}
  }
\subfigure[Uniform noise]{
   \includegraphics[width=.211\textwidth]{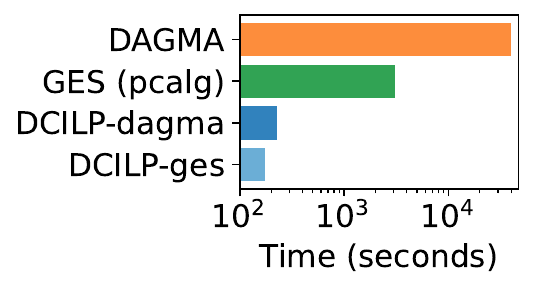} 
   \hspace{-4.2mm}
   \includegraphics[width=.133\textwidth]{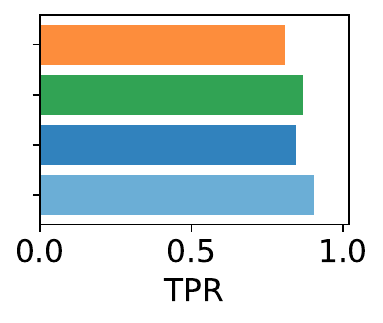}
   \hspace{-3.6mm}
   \includegraphics[width=.133\textwidth]{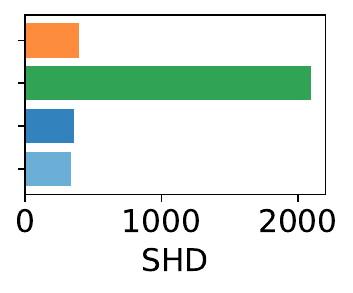}
  }
\caption{Comparative results on MUNIN linear SEM with Gaussian (a) and Uniform (b) noise distributions. The ratio $n/d$ is 5.}
\label{fig:munin-u5}
\end{figure}

\subsection{Impact of ILP in \phat}
\label{ssec:discuss-exp}
\label{parag:exp-lpvsconcat}

The actual effects of the ILP reconciliation process are analyzed by comparing the performance of \DD\ and \DG\ with that of the naive merge $\widehat{B}$~(Eq. \eqref{eq:naive}) of the causal graphs obtained in \PD, %
on the ER2 and SF3 cases. 
As shown in Fig. \ref{fig:exp1-ph3}, the main impact of the ILP-based reconciliation 
is that it significantly reduces FDR, 
at the expense of a slight reduction in TPR, particularly so for high values of $d$ (TPR remaining above 90\%). 

\begin{figure}[htpb]
    \centering
\includegraphics[width=.432\textwidth]{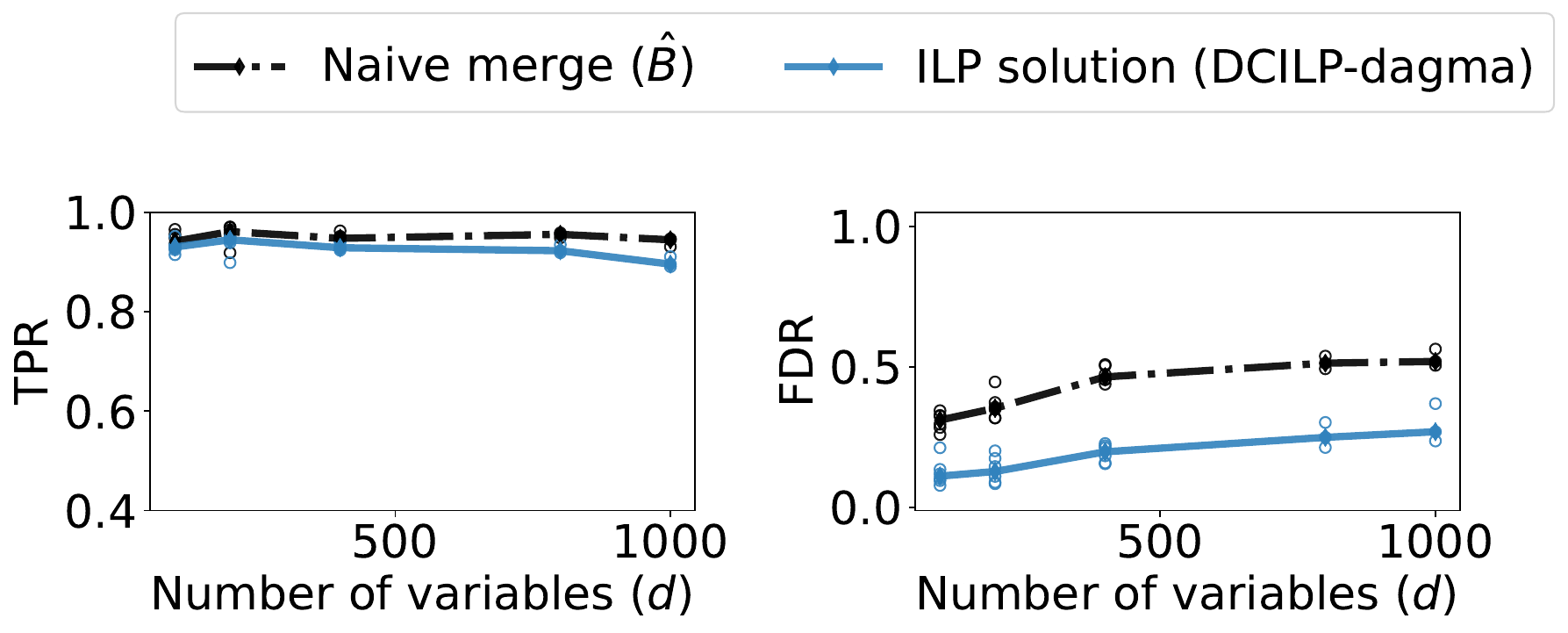}  
\caption{Comparison with the naive merge $\widehat{B}$: \DD\ on SF3 data.}
\label{fig:exp1-ph3}
\end{figure}

\section{Position with Respect to Related Work} 
\label{sec:SoA}

The most related work is the divide-and-conquer strategy proposed by \citep{gu2020learning}, defining the {\em Partition, Estimation and Fusion} (PEF) approach. The partition step proceeds as a hierarchical clustering algorithm. %
The estimation step %
consists of %
estimating the DAG or CPDAG associated with each cluster. 
A first difference between PEF and \XX\ regards the divide phase: in PEF, variables are partitioned while \XX\ considers overlapping Markov blankets. 
Noting that both approaches face the causal insufficiency issue, the redundancy involved in \XX-\phad\ possibly results in a better robustness. The second and most important difference between PEF and \XX\ regards the conquer phase. PEF tackles a combinatorial optimization problem, defining and selecting the best edges among the subgraphs, while \XX\ relies on the ILP resolution to ensure the consistency of the global causal graph built from the local ones. 

A top-down strategy, proposed by \cite{zhang2020learning} and aimed at CPDAG, proceeds by recursively splitting the set of variables in smaller subsets. Inversely, a bottom-up strategy is the graph growing structure learning (GGSL) proposed by \cite{gao2017local} and aimed to Bayesian Networks. In these approaches, a main goal is to reduce the number of conditional independence tests needed to elicit the overall causal structure. 
Another related work, though to a lesser extent, is the recursive variable elimination method named MARVEL proposed by~\cite{mokhtarian2021recursive}. 
Like \XX, the MARVEL algorithm relies on the identification of Markov blankets, using e.g. Grow-Shrink (GS)~\citep{margaritis1999bayesian}, IAMB~\citep{tsamardinos2003algorithms}, %
or precision matrix-based methods (\eg, \citep{loh2014high}). %
The main issue in practice is that the recursive process offers no room to recover from potential early mistakes.

\cite{jaakkola2010learning} tackle the problem of learning Bayesian network structures, where the acyclicity constraint is handled using linear programming relaxations. 
\cite{cussens2017bayesian,cussens2023branch} proposed integer linear programming methods for Bayesian network learning, by which the IP problem considers binary variables $x_{i\leftarrow J}$ defined on the ensemble of subsets as indicators of whether a subset $J$ is the parent set of $X_i$. This problem setting is proposed for exact learning of Bayesian networks (DAGs) based on a decomposable score function; in particular, it does not make use of Markov blankets, and %
involves much more decision variables than DCILP. %
\section{Conclusion}
\label{sec:conclu}
The divide-and-conquer \XX\ framework is based on the natural decomposition of the desired causal graph defined by the Markov blankets $\mbo(X_i)$ associated with each variable $X_i$. The main contribution of the paper is to demonstrate that the causal relationships involving $X_i$ $-$ learned in parallel by focusing on $\mbo(X_i)$ $-$ can be reconciled using an ILP formulation, and that this reconciliation can be efficiently handled by an ILP solver.
This three-phase framework accommodates a variety of choices for each phase, from identifying the Markov blankets in \phao\ to causal learning in \PD\ and selecting the ILP solver in \PT. The experimental results effectively demonstrate the framework's merits, particularly so on the large-scale real-world MUNIN data, 
achieving a computational time reduced by two orders of magnitude with on-par SHD compared to DAGMA (respectively, one order of magnitude in computational time with a significant gain in SHD compared to GES). 

A main limitation of the approach comes from the learning accuracy of the Markov blankets and the local causal relations in \phao\ and \PD, which require a sufficient samples-to-variables ratio.
 
One avenue for future research is to consider the three phases in a 
more integrated manner, %
\eg, taking into account the strength of the causal edges identified in \PD\ within the ILP objective in \PT\ (as opposed to only taking into account the existence of such edges). 

Another perspective is to leverage the multiple solutions discovered by the ILP solver in \PT. For instance, the causal relations retained in several ILP solutions can be used to define the backbone of the desired causal graph, thus extending \XX\ in the spirit of ensemble learning. A longer-term perspective is to approximate well-studied causal scores \cite{loh2014high} using bilinear functions, in order to extend \PT, from integer linear programming to quadratic programming.

\appendix
\section*{Acknowledgments} 
We thank the anonymous reviewers for their helpful comments and suggestions. The first author was funded by a grant from Fujitsu, and the work was also supported by the CAUSALI-T-AI project of PEPR-IA (ANR-23-PEIA-0007 CAUSALI-T-AI).



\newpage
\onecolumn
\section*{Appendix}\label{sec:sm}

\section{Organization of the appendix}

The appendix is organized as follows:
\begin{itemize}
    \item Proof of \Cref{def:mergeco} 
    \item Detail of the algorithms (\phao\ to \phat)
    \item Complementary experiments 
\end{itemize}

\section{Proof of \Cref{def:mergeco}}
\label{sec-app:prf}

For $\widehat{B}^{(i)}$ ($i\in [d]$) obtained in \phad\ of \Cref{alg:dc-ges}, the edge set of its graph is restricted within %
\begin{align*}%
    C_{i}= \{(i,k): k \in \mbo(X_i)\}\cup  \{(k,i): k \in \mbo(X_i)\}. 
\end{align*}
Hence the intersection of the support graphs of $\widehat{B}^{(i)}$ and $\widehat{B}^{(j)}$ for any pair $i\neq j$ is included in $C_i\cap C_j=\{(i,j), (j,i)\}$.  
Therefore, by \Cref{defi:mergeco},  
$\widehat{B}^{(i)}$ and $\widehat{B}^{(j)}$ result in a merge conflict if and only if    
\begin{align*}
    \widehat{B}^{(i)}_{ij} \neq \widehat{B}^{(j)}_{ij}  \text{~or~} 
\widehat{B}^{(i)}_{ji} \neq \widehat{B}^{(j)}_{ji}. 
\end{align*}

As a consequence, all merge conflicts can be classified into the following three types, according to the number of nonzeros in the quadruplet 
$  Q_{ij}=\{\widehat{B}^{(i)}_{ij}, \widehat{B}^{(j)}_{ij}, \widehat{B}^{(i)}_{ji}, \widehat{B}^{(j)}_{ji}\}$: 

\begin{table}[h]
\small 
\centering
\begin{tabular}{l|cc|cc|c|c}
\hline
Type  & $\hat{B}^{(i)}_{ij}$  & $\hat{B}^{(i)}_{ji}$ & $\hat{B}^{(j)}_{ij}$  & $\hat{B}^{(j)}_{ji}$ & Graphical model    & Characteristics \\ 
\hline
\multicolumn{1}{c|}{\multirow{4}{*}{\it (3) Undirected}} 	& * 	& 0   & *    	& *   	&  $(i\rightarrow j)$ vs $(i\leftrightarrow j)$  & \multicolumn{1}{c}{\multirow{4}{*}{$\nnz(Q_{ij})=3$}}   \\ 
\multicolumn{1}{c|}{                    } 	                & 0 	& *   & *    	& *    	&  $(i\leftarrow j)$ vs $(i\leftrightarrow j)$   &  \multicolumn{1}{c}{                    }               \\ 
\multicolumn{1}{c|}{                    } 	                & \vdots & \vdots & \vdots    	& \vdots    	&  \vdots                        &  \multicolumn{1}{c}{                    }                \\ 
\multicolumn{1}{c|}{                    } 	                & * 	& *   & 0    	& *    	&  $(i\leftrightarrow j)$ vs $(i\leftarrow j)$   &  \multicolumn{1}{c}{                    }                \\ 
\hline
\multicolumn{1}{c|}{\multirow{4}{*}{\it (2) Acute}} 	& * 	& 0   & 0    	& *   	&  $(i\rightarrow j)$ vs $(i\leftarrow j)$  & \multicolumn{1}{c}{\multirow{4}{*}{$\nnz(Q_{ij})=2$ }}   \\ 
\multicolumn{1}{c|}{                    } 	            &  0 	&  *   & *    	& 0    	&  $(i\leftarrow j)$ vs $(i\rightarrow j)$   &  \multicolumn{1}{c}{                    }               \\ 
\multicolumn{1}{c|}{                    } 	            &  * 	&  *   & 0    	& 0    	&  \redl{$(i\leftrightarrow j)$ vs $(i\ind j)$}   &  \multicolumn{1}{c}{                    }               \\ 
\multicolumn{1}{c|}{                    } 	            &  0 	&  0   & *    	& *    	&  \redl{$(i\ind j)$ vs $(i\leftrightarrow j)$}   &  \multicolumn{1}{c}{                    }               \\ 
\hline
\multicolumn{1}{c|}{\multirow{4}{*}{\it (1) Addition}} 	& * 	& 0   & 0    	& 0   	&  $(i\rightarrow j)$ vs $(i\ind j)$  & \multicolumn{1}{c}{\multirow{4}{*}{$\nnz(Q_{ij})=1$ }}   \\ 
\multicolumn{1}{c|}{                    } 	            & 0 	& *   & 0    	& 0    	&  $(i\leftarrow j)$ vs $(i\ind j)$   &  \multicolumn{1}{c}{                    }               \\ 
\multicolumn{1}{c|}{                    } 	            & \vdots & \vdots & \vdots    	& \vdots    	&  \vdots                        &  \multicolumn{1}{c}{                    }                \\ 
\multicolumn{1}{c|}{                    } 	            & 0 	& 0   & 0    	& *    	&  $(i\ind j)$ vs $(i\leftarrow j)$  &  \multicolumn{1}{c}{                    }                \\ 
\hline
\end{tabular}
\caption{Classification of all merge conflicts from a pair of local results. The symbol * indicates a nonzero number.} \label{tab:dnc-ges-mf}
\end{table}

$\square$

\section{Algorithms}
\label{sec-app:algo}
This section describes the algorithms involved in \phao, \PD\ and \phat\ of \XX. 

\subsection{\phao: an empirical inverse covariance estimator} %
\label{ssec:app-sel}

\begin{algorithm}[H] 
    \caption{Empirical inverse covariance estimator}\label{alg:ice-emp}
    \begin{algorithmic}[1]
        \REQUIRE{Data matrix $X\in\reals^{n\times d}$, parameter $\lambda_1\in (0,1)$} 
        \ENSURE{$\widehat{\Theta}_{\lambda_1}\in\dom$} 
\STATE Compute empirical covariance and its inverse: 
\begin{align}
    \label{eq:ice-emp}
    \widehat{C} = \frac{1}{n} \trs[(X-\bar{X})] (X-\bar{X})\quad \text{and} \quad
    \widehat{\Theta} = \widehat{C}^{\dagger}, %
\end{align}
where $\widehat{C}^{\dagger}$ denotes the pseudo-inverse of $\widehat{C}$. 
\STATE Element-wise thresholding on off-diagonal entries: for any $i\neq j$ 
\begin{align}
    \label{eq:theta-soft-th}
    (\widehat{\Theta}_{\lambda_1})_{ij} = \twopartdef{ 0 }{ |\widehat{\Theta}_{ij}| \leq \lambda_1 \max_{i\neq j}\{|\widehat{\Theta}_{ij}|\} }{ \widehat{\Theta}_{ij}}{\text{otherwise.}} 
\end{align}
\end{algorithmic}
\end{algorithm}

In the presented implementation of \XX, we use a basic empirical inverse covariance estimator  for the inference of Markov blankets (\Cref{alg:ice-emp}). 
In the computation of Eq.~\eqref{eq:ice-emp} (below), the pseudo-inverse $\widehat{C}$ coincides with the inverse of $\widehat{C}$ when $\widehat{C}$ is positive definite (which is the case for $n \ge d$). %
In the computation of matrix $\widehat{\Theta}_{\lambda_1}$~\eqref{eq:theta-soft-th}, the off-diagonal entries of the input matrix $\widehat{\Theta}$ are canceled out if their absolute value is smaller than $\lambda_1 \max_{i\neq j}\{|\widehat{\Theta}_{ij}|\}$  
for a parameter $\lambda_1 \in (0,1)$.

\paragraph{Selection of $\lambda_1$ for \Cref{alg:ice-emp}.}
\label{ssec-app:sel-ice-emp}
The parameter $\lambda_1$ is selected in the spirit of the GraphLasso objective function \cite{Friedman2007}\footnote{\cite{Friedman2007}: Friedman J, Hastie T, Tibshirani R. Sparse inverse covariance estimation with the graphical Lasso. Biostatistics. 2008 Jul;9(3):432-41.}; the selected value is the maximum of the following criterion function 
\begin{align} \label{eq:crit-c1}
C(\lambda_1) := \trace(\widehat{C}\widehat{\Theta}_{\lambda_1}) - \log\det(\widehat{\Theta}_{\lambda_1})
\end{align}
on a 20-sample grid in $[\lambda_1^{\min}, \lambda_1^{\max}]$. 

The total time for selecting $\lambda_1$ with \Cref{alg:ice-emp} 
corresponds to the computational time of \phao\ in the benchmark discussed in
\Cref{sec:exp}. 

Based on the above setting, \phao\ involves running \Cref{alg:ice-emp} for each candidate value of $\lambda_1$ in a given segment $[\lambda_1^{\min}, \lambda_1^{\max}]$. The following is an example of how to set up an appropriate value for $\lambda^{\max}_1$, assuming  
that the sought causal structures have an average degree $1\leq \degr\leq 4$: the target sparsity of $\widehat{\Theta}_{\lambda_1}$ by \Cref{alg:ice-emp} is bounded by $\bar{\rho}_{\degr}=\max(\frac{\degr}{d})\approx 2.0\%$ for graphs with $d\geq 200$ nodes. 
This gives us an approximate target percentile of around $98\%$, \ie, retaining the $2\%$ edges with max. absolute weight.
In other words, the maximal value $\lambda_1^{\max}$ of the grid search area is set as 
$\lambda_1^{\max} :=\frac{|\widehat{\Theta}_{\mathrm{off}}(\ttau)|}{\maxn[\widehat{\Theta}_{\mathrm{off}}]}$, 
where $\ttau$ refers to the index of the $98$-th percentile in $\{|\widehat{\Theta}_{\mathrm{off}}|\}$. %
For the experiments with ER2 graphs in \Cref{sec:exp}, the estimated $\lambda_1^{\max}$ is $6.10^{-1}$. Hence, the search grid of $\lambda_1$ is set up as $n_{I_1}=20$ equidistant values on $I_1 = [10^{-2}, 6.10^{-1}]$. 
More details about the segment settings are given case by case in \Cref{ssec-app:params}.

\begin{figure}[H]
  \centering
     \subfigure[Criterion Eq.~\eqref{eq:crit-c1} used]
   {\includegraphics[width=.24\textwidth]{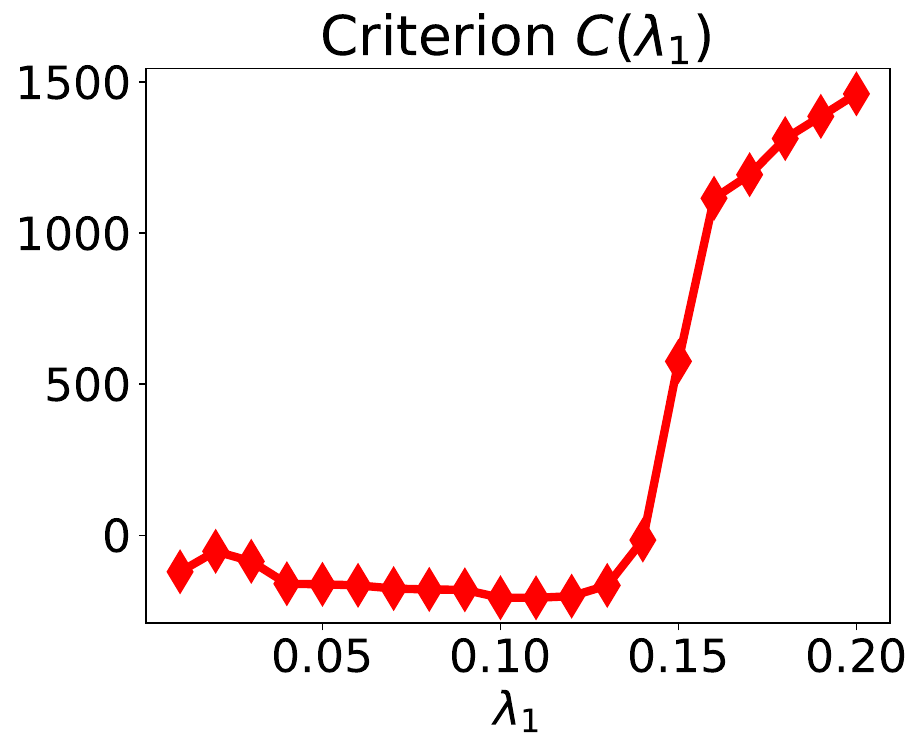}}
  \hspace{1cm} 
  \subfigure[Distance$(\widehat{\Theta}_{\lambda_1},\Theta^{\star})$ with ground-truth knowledge]
  {\includegraphics[width=.24\textwidth]{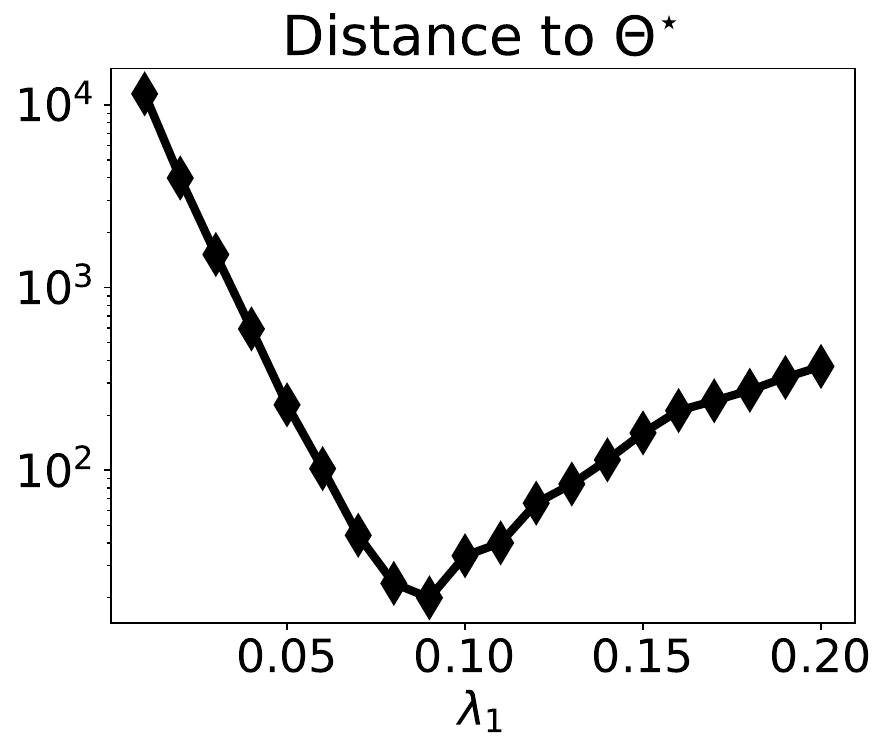}}   
    \caption{Selecting parameter $\lambda_1$ in \Cref{alg:ice-emp} in the
    case of linear Gaussian SEM on ER2 ($d=200$ nodes): (a)
    Selection of $\widehat{\Theta}_{\lambda_1}$ based on $C(\lambda_1)$ %
    (Eq.~\eqref{eq:crit-c1}); 
    (b) Distance of $\widehat{\Theta}_{\lambda_1}$ to the ground truth $\Theta^\star$. 
    }
\label{fig:app-sel-lam1}
\end{figure}

\Cref{fig:app-sel-lam1} shows the criterion value compared to the Hamming distances with the oracle precision matrix $\Theta^{\star}$. %
We observe that the selection criterion with $\argmin_{I_{1}} C(\lambda_1)$ yields a value
that is rather close to the optimal value in terms of distance of $\widehat{\Theta}_{\lambda_1}$ to the oracle precision matrix $\Theta^{\star}$. 

\subsection{\phat: the ILP}
\label{ssec-app:ilp-size}

The ILP formulation consists of maximizing the inner product $\braket{\widehat{B}, B}$ subject to 
constraints \eqref{eq:mb}--\eqref{eq:v-str4}, %
along with the sparsity constraint \eqref{eq:bb} and the 2-cycle exclusion constraint \eqref{eq:2cyc}: 

\begin{align}
     & \underset{B\in\mathbb{B}^{d\times d}, S\in\mathbb{S}^{d\times d},
        V\in\mathbb{B}^{d\times d\times d}}{\text{maximize}} \braket{\widehat{B}, B} %
     \quad \text{subject to } \label{eq-app:fobj-lp-cp} \\ 
     & \hspace{9mm} { B_{ij}=B_{ji} =0  \hspace{36mm} 
     \text{~if ~} (\widehat{B}_{ij}=\widehat{B}_{ji} = 0) } \label{eq-app:sp1-cp} \\ 
     & \hspace{9mm} S_{ij} =S_{ji} =0  \hspace{38mm} \text{~if~} X_i \notin \mbo(X_j) \label{eq:sp} \\ %
    & \hspace{9mm} V_{ijk} =0  \hspace{46mm} \text{~if~} \widehat{B}_{ik}=0 \text{~or~} \widehat{B}_{jk} = 0 \label{eq-app:sp3-cp} \\ 
    & \hspace{9mm} B_{ij} + B_{ji} \leq 1, \quad  B_{ij} + B_{ji} + S_{ij} \geq 1  \quad \text{if~}  X_i\in\mbo(X_j) \label{eq-app:2cycle-exc-cp}\\ 
    & \hspace{9mm} V_{ijk} \leq B_{ik}, ~ V_{ijk} \leq B_{jk}, ~ V_{ijk} \leq S_{ij}  \quad \text{and}  \label{eq-app:v1-cp}\\ 
    & \hspace{9mm} B_{ik} + B_{jk} \leq 1 + V_{ijk}, ~ S_{ij} \leq \sum_{k} V_{ijk} \quad \text{if~} \widehat{B}_{ik} \neq 0, \widehat{B}_{jk}\neq 0.   \label{eq-app:v2-cp} 
\end{align}
In view of \Cref{def:mergeco}, the following rules can be used to attribute edge weights to the concatenation $\widehat{B}=\sum_{i}\widehat{B}^{(i)}$~(Eq. \ref{eq:naive}), where $\widehat{B}^{(i)}$ is the $(0,1)$-valued matrix obtained in \phad.  

       \begin{table*}[h]
           \centering 
        \begin{tabular}{cc|c|c|c|c}
            \hline
            \multirow{2}{*}{$(\hat{B}_{ij}, \hat{B}_{ji})$} & \multirow{2}{*}{Merge status} &  \multicolumn{4}{c}{\multirow{1}{*}{$(\widehat{B}_{ij}, \widehat{B}_{ij})$}}  \\ 
                                                            &                               &  LP1 & LP2 & LP3 & LP4                    \\ 
            \hline
            (0,0)   & \multirow{4}{*}{no conflict}         & \multirow{4}{*}{$\bm{1}(\cdot,\cdot)$} & \multirow{4}{*}{$\text{id}(\cdot,\cdot)$} & \multirow{4}{*}{$\text{id}(\cdot,\cdot)$}  &  \multirow{4}{*}{$\bm{1}(\cdot,\cdot)$}  \\ 
            (2,0)   &                     &                                                         &                                           &                                            &                                      \\  
            (0,2)   &                     &                                                         &                                           &                                            &                                      \\  
            (2,2)   &                     &                                                         &                                           &                                            &                                       \\ 
            \midrule 
            (0,1)   & \multirow{2}{*}{Type 1 (Addition)}  &  \multirow{2}{*}{$\bm{1}(\cdot,\cdot)$} & \multirow{2}{*}{$\text{id}(\cdot,\cdot)$} & \multirow{2}{*}{$\text{id}(\cdot,\cdot)$} &       (0, $\frac{1}{2}$)                             \\ 
            (1,0)   &                                     &                                         &                                           &                                           &       ($\frac{1}{2}$, 0)                             \\ 
            \midrule 
            (1,1)   & Type 2 (Acute)                      &  \multirow{1}{*}{$\bm{1}(\cdot,\cdot)$} & \multirow{1}{*}{$\text{id}(\cdot,\cdot)$} & \multirow{1}{*}{$\text{id}(\cdot,\cdot)$}  &     ($\frac{1}{2}$, $\frac{1}{2}$)                  \\ 
            \midrule 
            (2,1)   & \multirow{2}{*}{Type 3 (Dir/Undir)}  &  \multirow{2}{*}{$\bm{1}(\cdot,\cdot)$} &\multirow{2}{*}{$\text{id}(\cdot,\cdot)$} &  (2, 0)      &  ($\frac{2}{3}$, 0)                              \\  
            (1,2)   &                                      &                                         &                                          &  (0, 2)      &  (0, $\frac{2}{3}$)                              \\  
            \midrule
        \end{tabular}
\caption{Concatenation weights depending on the merge status. The original concatenation states that $\widehat{B}_{ij}= \widehat{B}^{(i)}_{ij} + \widehat{B}^{(j)}_{ij}$ for all $i,j\in [d]$, $j\neq i$.} \label{tab-app:grb-ycode} 
       \end{table*}

\subsection{Parallelization in \phad}

In the presented experiments, the parallelization of \XX' \phad\ (\Cref{alg:dc-ges}, line 2) is limited on a maximum of 400~CPU cores. This is ensured by the SLURM scheduler via the command ``\texttt{sbatch --ntasks=}$N_{\max}$'', where $N_{\max} = \min(2d, 400)$ for any problem with $d$ variables. According to how the subproblems' sizes differ, depending on the Markov blanket sizes, we have two types of parallelization schemes as follows.

\paragraph{\phad: parallelization with equal number of CPU cores for each subproblem.}
In cases where the Markov blanket sizes are fairly %
homogeneous, 
it is natural to distribute the subproblems of DCILP' \phad\ onto a fixed number (one or two) of CPU cores. 
The procedure is detailed in \Cref{alg:para0}.

\begin{algorithm}[H]
    \caption{(\phad) Parallelization with equal number of CPU cores}\label{alg:para0}
    \begin{algorithmic}[1]
        \REQUIRE{Data matrix $X\in\reals^{n\times d}$, Markov blankets $\mbo(X_i)$ for $i=1,\dots, d$, $N_{\max} = 400$ } 
        \ENSURE{Subproblem solutions $B^{(i)}$} 
\FOR{$i=1,\dots, d$ {\bf in parallel}} 
\WHILE{CPU cores available (among $N_{\max}$)} %
\STATE{Launch the $i$-th subproblem on 2 CPU cores: }
\STATE{ \hspace{1cm} $A^{(i)} \leftarrow$ Causal discovery on $\si[i]:=\mbo(X_i)\cup\{X_i\}$ \hfill \# Using DAGMA or GES (pcalg)} 
\STATE{ \hspace{1cm} $\widehat{B}^{(i)}_{j,k} \leftarrow A^{(i)}_{j,k}$ if $j=i$ or $k=i$, \text{~ and~} 0 otherwise 
}
\STATE{(Release CPU cores when finished) \hfill \#  managed by SLURM }
\ENDWHILE
\ENDFOR
    \end{algorithmic}
\end{algorithm}

The above scheme is used in all tests on ER1 and ER2 datasets. 

\paragraph{\phad: parallelization with non-uniform number of CPUs (in presence of large Markov blankets). }

In cases where some Markov blankets are much larger than others, 
we use a non-uniform distribution of CPU cores to the subproblems of DCILP' \phad, %
depending on the size of the MBs. The procedure is detailed in \Cref{alg:para-ncpus}. 

The execution of the parallel for-loop in \Cref{alg:para-ncpus} is limited by the same predefined CPU budget ``\texttt{--ntasks=}$N_{\max}$'', which is ensured and managed automatically by SLURM.
Note that when the number of variables $d \geq N_{\max}$, it is certain that congestion happens, i.e., some tasks will need to wait in the queue for the earlier tasks to finish. In
such cases, the subproblems are ordered by decreasing size of the Markov blankets, meant to put first the largest (longest) tasks. 
Furthermore, the top $\rho d$ largest tasks are allocated with 4 CPU cores (instead of two), where $\rho$ is set to 3\% or 8\% in the experiments. 

\begin{algorithm}[H] 
    \caption{(\phad) Parallelization with non-uniform number of CPUs (in case of large Markov blankets) }\label{alg:para-ncpus}
    \begin{algorithmic}[1]
        \REQUIRE{Data matrix $X\in\reals^{n\times d}$, Markov blankets $\mbo(X_i)$ for $i=1,\dots, d$, $N_{\max} = 400$, percentage $\rho \in (0,1)$ } 
        \ENSURE{Subproblem solutions $B^{(i)}$} 
\STATE Order variables indices by decreasing size of Markov blankets
                $|\mbo(X_{k_1})| \geq \dots \geq |\mbo(X_{k_d})|$. 
\FOR{$i=1,\dots, d$ {\bf in parallel}} 
\WHILE{CPU cores (among $N_{\max}$) available} %
\IF{ $i\leq \rho d$  }
\STATE{Launch the $k_i$-th subproblem on 4 CPU cores:} %
\STATE{ \hspace{1cm} $\widehat{B}^{(k_i)} \leftarrow$ \textsc{BatchLocalLearn}($\mbo(X_{k_i}) \cup \{X_{k_i}\}$) \hfill\# see \Cref{alg:bll-dagma}}
\STATE{(Release CPU cores when finished) \hfill \# managed by SLURM }
\ELSE{}
\STATE{Launch the $k_i$-th subproblem on 2 CPU cores: }
\STATE{ \hspace{1cm} $A^{(k_i)} \leftarrow$ Causal discovery on $\si[k_i]:=\mbo(X_{k_i})\cup\{X_{k_i}\}$ \hfill \# Using DAGMA or GES (pcalg)} 
\STATE{ \hspace{1cm} $\widehat{B}^{({k_i})}_{j,\ell} \leftarrow A^{(k_i)}_{j,\ell}$ if $j=k_i$ or $\ell=k_i$, \text{~ and~} 0 otherwise 
}
\STATE{(Release CPU cores when finished) \hfill \# managed by SLURM }
\ENDIF
\ENDWHILE
\ENDFOR
    \end{algorithmic}
\end{algorithm}

This scheme is used in all experiments with SF3 and MUNIN datasets.

        \begin{algorithm}[H]
            \caption{(BatchLocalLearn) Local learning in batches for the causal subproblem} 
\label{alg:bll-dagma}
            \begin{algorithmic}[1]
                \REQUIRE{Dataset $X\in\reals^{n\times d}$, target node $T$, Markov blanket $\mbo(T)$, batch size proportion $\rho \in (0,1)$  }
                \ENSURE{Adjacency matrix $B$ encoding the parent set and children set of $T$ }
                \STATE Batch size $s := \lceil {\rho} |\mbo(T)| \rceil$  
                \STATE Initialization: $\pcs_T \leftarrow \emptyset$, \quad $O \leftarrow \mbo(T)$   
                \WHILE {$O$ is nonempty} 
                \STATE \bleu{Choose a small batch} of $s$ nodes: $S \subset O$ if $|O|\geq s$, otherwise $S\leftarrow O$ 
                \STATE Update subsets $O \leftarrow O \backslash S$, \quad 
                        $ Z \leftarrow S \cup \pcs_T \cup \{X\}$ 
                    \STATE Update $\pcs_T$: 
                        \begin{align} 
                            & \grp' \leftarrow \textsc{DAGMA}(Z, X_{|Z}) \text{~ ~or~ ~} \textsc{GES}(Z, X_{|Z}) \label{eq:bll-dagma} \\
                            & \pcs_T, \pss_T, \chs_T  \leftarrow \textsc{FindPC} (\grp', T). \label{eq:findpc}
                        \end{align}
                \ENDWHILE
                \STATE Encode the parent set $\pss_T$ and children set $\chs_T$ of $T$ into the adjacency matrix $B$ 
            \end{algorithmic}
        \end{algorithm}

In line 6 of \Cref{alg:para-ncpus}, \textsc{BatchLocalLearn} denotes the alternative subroutine for solving the subproblem (learning the causes and effects of the target variable). %
This subroutine, detailed in \Cref{alg:bll-dagma}, is an adaptation of the LocalLearn algorithm from \cite{gao2017local}. \textsc{BatchLocalLearn} differs with LocalLearn in: (i) it takes in small 
subsets %
of nodes instead of one node during the iterative update process; and (ii) only the parent/children sets are updated iteratively while LocalLearn had an additional task of updating the Markov blanket. 

In \Cref{alg:bll-dagma}, ($\pss_T$, $\chs_T$, $\pcs_T$) denote the parent set (the set of parent nodes), the children set, and the union of parent-and-children nodes of a target node $T$, respectively. In line 6 (Eq.~\eqref{eq:findpc}) of \Cref{alg:bll-dagma}, given the adjacency matrix of a graph $\grp'$, \textsc{FindPC} denotes the operation of getting $\pss_T$ (encoded by the nonzeros in $T$'s column), $\chs_T$ (nonzeros in $T$'s row), and their union $\pcs_T$. 
In line 6 (Eq.~\eqref{eq:bll-dagma}), one of the two local learners (either DAGMA or GES) is used, according to the choice in \Cref{alg:para-ncpus} for \phad. %

\section{Experiments}
\label{sec-app:exp}

\subsection{Evaluation metrics} 
\label{ssec-app:metrics}

Let T (respectively F) denote the number of true edges (resp. non-edges) in the  ground truth graph $\cal G$. Let P the number of edges in the estimated graph, R the number of reversed edges, TP and FP the number of true positives and false positives. %
Let E be the number of edges in the estimated graph that do not appear in $\cal G$ and $M$ the number of edges in $\cal G$ that do not appear in the estimated graph. 
With these notations, the usual performance indicators for causal graph learning are defined as follows:
\begin{align*}
    \text{TPR} & = \text{TP/T                      \quad (higher is better),} \\ 
    \text{FDR} & = \text{(R + FP)/P           \quad  (lower is better),}  \\ 
    \text{FPR} & = \text{(R + FP)/F            \quad  (lower is better),}  \\ 
    \text{SHD} & = \text{E +M + R              \quad (lower is better).} 
\end{align*}

\subsection{Computational environment}
\label{ssec-app:comp-src}

The computation of \XX' \phad\ is distributed on $N_{\max}=\min(2d, 400)$ CPU cores, as specified in the description of \Cref{alg:para0} and \Cref{alg:para-ncpus}. The cluster for this parallel computation has three types of CPUs: 
\begin{itemize}
    \item Intel Xeon E5-2620 v4 8 cores / 16 threads @ 2.40 GHz (Haswell)
    \item Intel Xeon Gold 5120 14 cores / 28 threads @ 2.2GHz (Skylake) 
    \item Intel Xeon Gold 6148 20 cores / 40 threads @ 2.4 GHz (Skylake)
\end{itemize}
The computation of \phao\ and \phat\ are conducted on two and four CPU cores from the same cluster.

DAGMA is run on one CPU core from a 28-thread CPU of Intel(R) Xeon(R) Gold 5120. 
GES (pcalg) is run on one CPU core from a 4-thread Intel(R) Xeon(R) Gold 6252. 
GOLEM is run on a Tesla V100S-PCIE-32GB GPU (with 8-thread AMD EPYC 7352).

\subsection{Learning parameters and SLURM parameters}
\label{ssec-app:params}

The parameters of \XX\ are as follows:
\begin{itemize}
    \item[(a)] For \phao: the range  $[\lambda_1^{\min}, \lambda_1^{\max}]$ for the grid search (with 20 equidistant values therein) using the inverse covariance estimator (\Cref{alg:ice-emp}) 
    \item[(b)] For \phad\ of \XX-dagma: parameter $\lambda_2$ of DAGMA 
    \item[(c)] For \phad: choice of parallelization between \Cref{alg:para0} and \Cref{alg:para-ncpus}. 
\end{itemize}

\paragraph{In the experiments on ER1, ER2 datasets:} 
\begin{itemize}
    \item[(a)]  $\lambda_1^{\min} = 0.05$, $\lambda_1^{\max} = 0.3$  
    \item[(b)]  $\lambda_2 = 0.01$ 
    \item[(c)] Parallelization of \phad\ is conducted with \Cref{alg:para0}. 
\end{itemize}

\paragraph{In the experiments on SF3 datasets:} 
\begin{itemize}
    \item[(a)]  $\lambda_1^{\min} = 0.02$, $\lambda_1^{\min} = 0.1$ for $d\geq 800$; and $\lambda_1^{\min} = 0.003$, $\lambda_1^{\max} = 0.1$ for $d< 800$.
    \item[(b)]  $\lambda_2 = 0.07$  
    \item[(c)] Parallelization of \phad\ is conducted with \Cref{alg:para-ncpus}
\end{itemize}

\paragraph{In the experiments on MUNIN datasets:} 
\begin{itemize}
    \item[(a)]  $\lambda_1^{\min} = 0.015$, $\lambda_1^{\min} = 0.07$ 
    \item[(b)]  $\lambda_2 = 0.001$. Disclaimer: we observe in preliminary tests that the performance of DAGMA is fairly stable with different choices of $\lambda_2$. The value set up here is lower than in the other experiments to have good ``recalls'' (for the local solutions to have TPRs as high as possible); it is possible to have similar performance with higher $\lambda_2$ values. 
    \item[(c)] Parallelization of \phad\ is conducted with \Cref{alg:para-ncpus} 
\end{itemize}

\subsection{Running time of \XX}

The running time benchmark of \XX-GES in the causal discovery tasks on ER1 and ER2 data is given in \Cref{fig-app:exp1-dcges-wt}.  

The running time benchmark of \XX-dagma is given in \Cref{fig-app:exp1-dagma-wt}. 

\begin{figure}[H] %
    \centering
    \subfigure[Results on ER1 data]{
        \includegraphics[width=.34\textwidth]{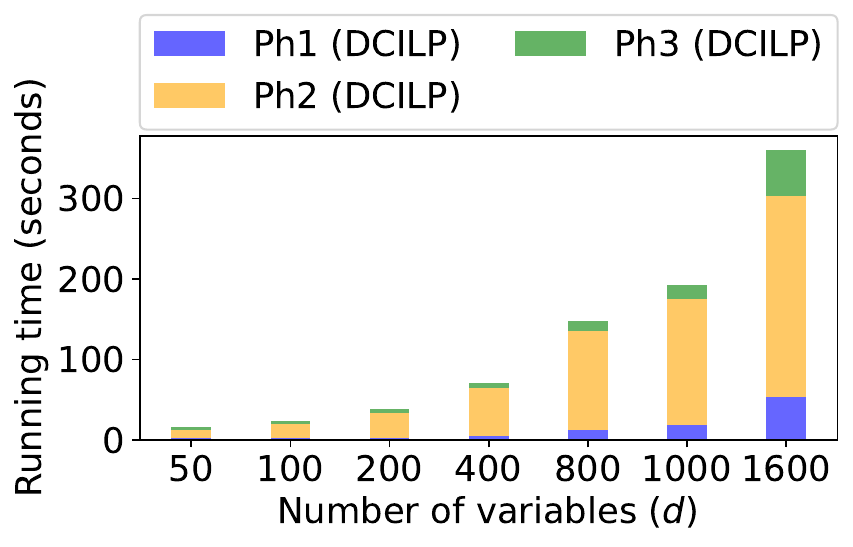} \qquad
        \includegraphics[width=.34\textwidth]{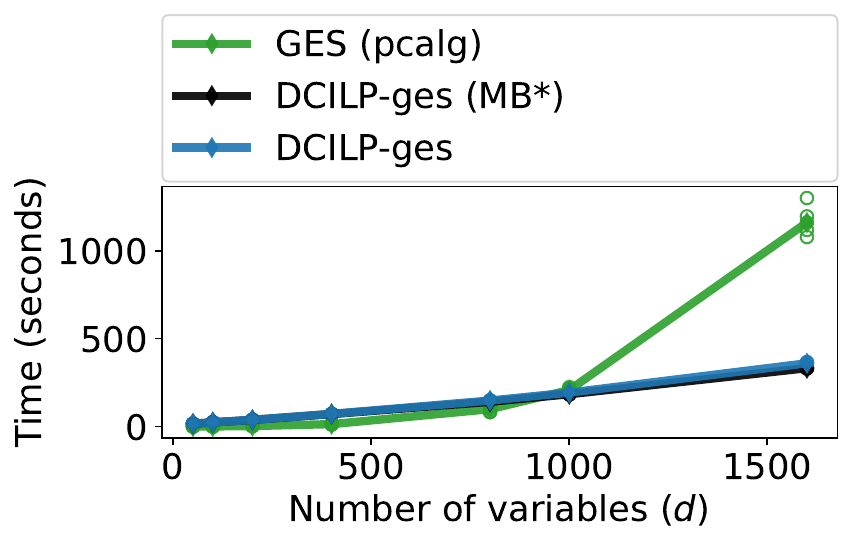}
    }
    \\
    \subfigure[Results on ER2 data]{  
        \includegraphics[width=.34\textwidth]{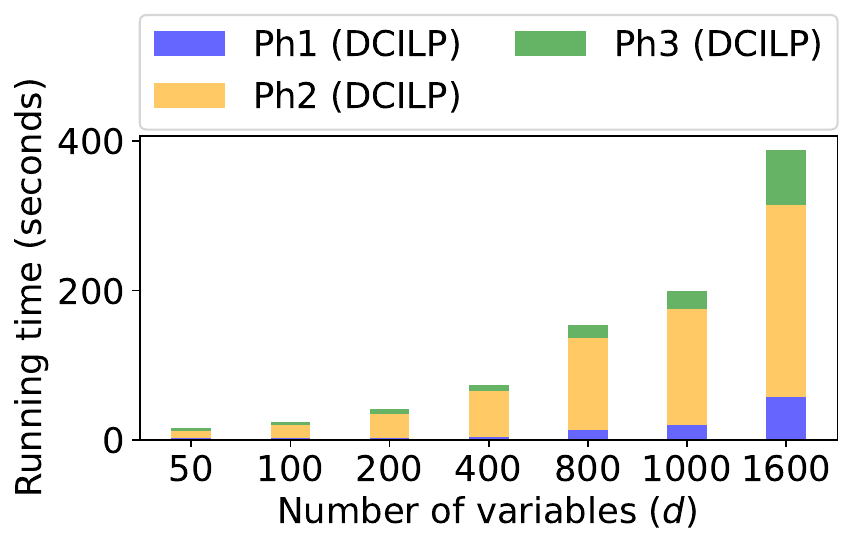}\qquad
        \includegraphics[width=.34\textwidth]{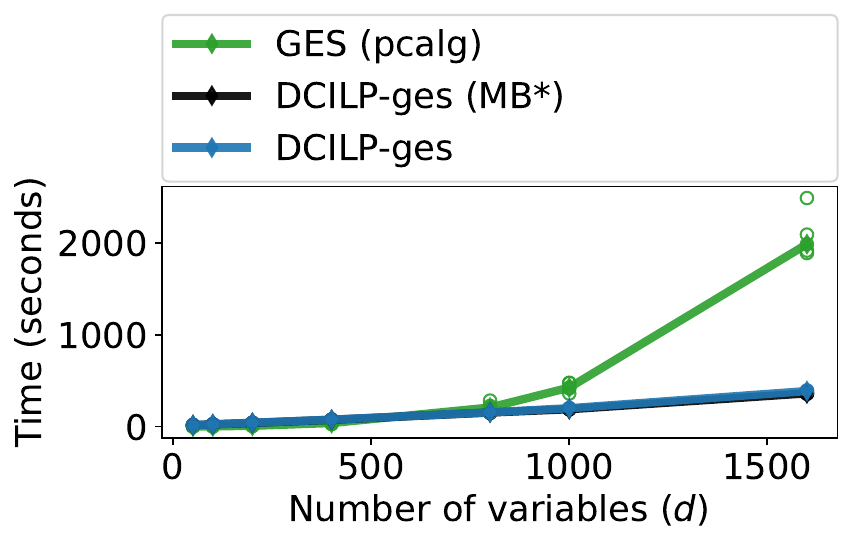}
    }
    \caption{Wall time of \XX-GES and GES for learning linear SEMs on ER1 and ER2 graphs. Left: wall time of the three phases
    of \XX-GES in linear scale; Right: wall time of the two algorithms depending on $d$.} 
    \label{fig-app:exp1-dcges-wt}
\end{figure}

\begin{figure}[H]
    \centering
    \subfigure[Results on ER1 data]{ 
        \includegraphics[width=.34\textwidth]{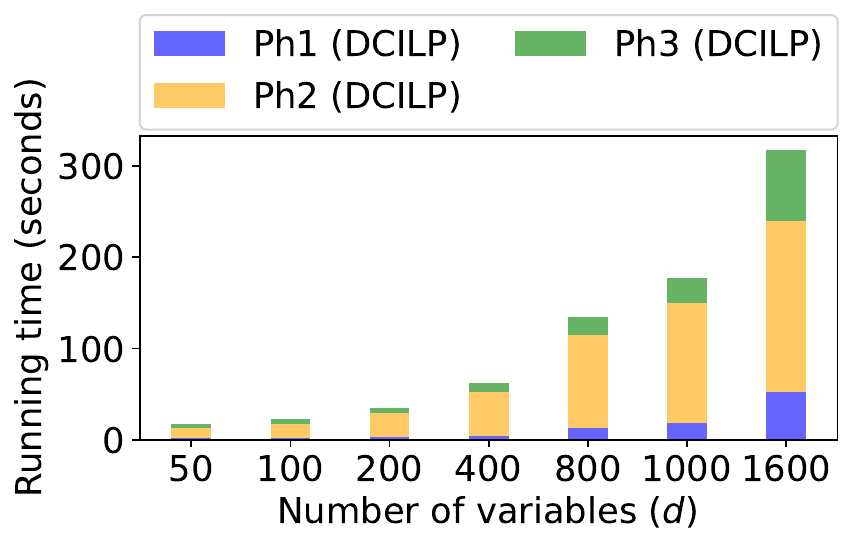} \qquad
        \includegraphics[width=.34\textwidth]{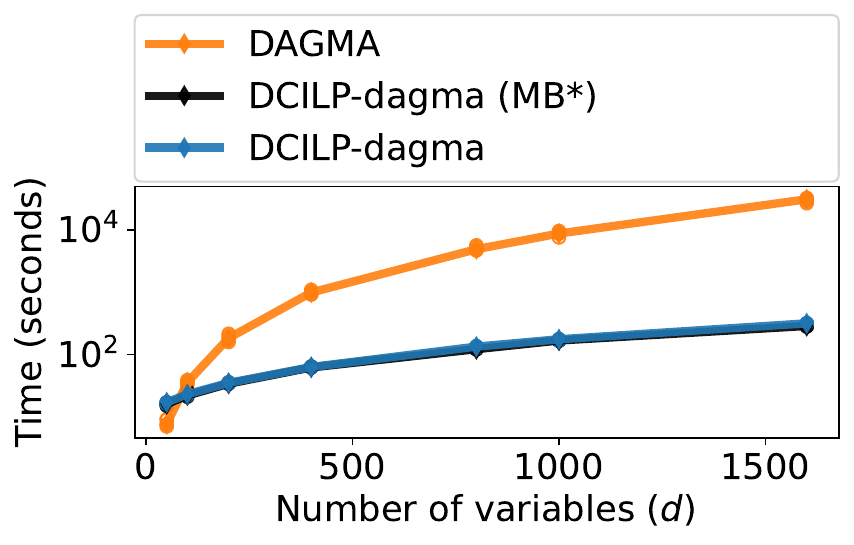}
    }
    \\
    \subfigure[Results on ER2 data]{ 
        \includegraphics[width=.34\textwidth]{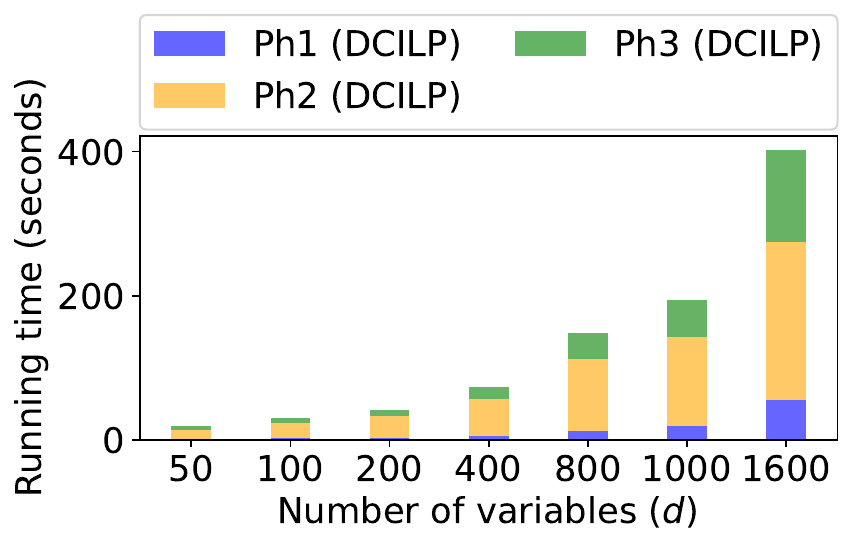} \qquad
        \includegraphics[width=.34\textwidth]{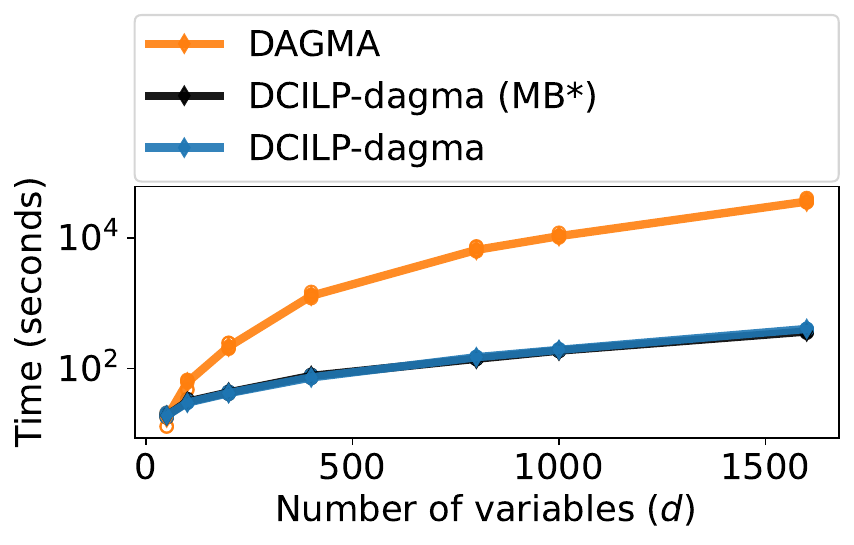}
    }
    \caption{Wall time of \XX-dagma and DAGMA for learning
    the linear SEM on ER1 and ER2 graphs. Left: wall time of the three phases
    of \XX-dagma in linear scale; Right: wall time (in log scale) of the two algorithms depending on $d$.}
    \label{fig-app:exp1-dagma-wt}
\end{figure}

\subsection{Results on SF3 dataset}
\label{ssec-app:sf3}

The results in \Cref{tab-app:sf3-one} show that \XX-dagma yields a relevant trade-off between performance and computational cost. Interestingly, the degradation due to the estimation of the Markov blankets is higher for $d=400$ than for $d=800$, which is attributed to better local effective $n/d$ ratios of the subproblems. 

\begin{table}[htpb]
\small 
\centering
\caption{Causal discovery results with Gaussian SEM data on SF3 graphs, with ratio samples versus dimension $n/d$ set to $20$. Each result is the median out of 3 independent runs (distinct random seeds), with nnz the overall number of edges in the estimated causal graph. } \label{tab-app:sf3-one}
\vspace{1mm}
\begin{tabular}{c|l||ccccr}
\hline
     $d$                                                                              & Algorithm            &  TPR        & FDR         &  SHD           & \# of Edges      & time (seconds)                                \\ 
\hline
\multicolumn{1}{c|}{\multirow{5}{*}{400}}     & GOLEM               & 0.987    	& 0.025   	& 46 {\scriptsize$\pm$ 14.47}   	& 1208          & 4437.526      \\ %
\multicolumn{1}{c|}{                   }    & DAGMA               & 0.999    	& 0.003   	& 3 {\scriptsize$\pm$ 1.53}   	    & 1196          & 3997.719        \\ 
\multicolumn{1}{c|}{                   }    & GES                 & 0.907    	& 0.471   	& 1029 {\scriptsize$\pm$ 237.57}   	& 2048          & 1330.314        \\ 
\multicolumn{1}{c|}{                   }    & DCILP-dagma (MB*)   & 0.964    	& 0.135   	& 208 {\scriptsize$\pm$ 24.43}   	& 1330        	& 1102.636        \\ 
\multicolumn{1}{c|}{                   }    & DCILP-dagma         & 0.925    	& 0.177   	& 322 {\scriptsize$\pm$ 68.83}   	& 1336        	& 974.538         \\ 
\hline
\multicolumn{1}{c|}{\multirow{5}{*}{800}}     & GOLEM               & 0.964    	& 0.049   	& 187 {\scriptsize$\pm$ 355.73}   	& 2426         & 25533.171   \\ %
\multicolumn{1}{c|}{                   }    & DAGMA               & 0.999    	& 0.003   	& 7 {\scriptsize$\pm$ 8.66}   	    & 2398         & 22952.589     \\ 
\multicolumn{1}{c|}{                   }    & GES                 & 0.878    	& 0.518   	& 2444 {\scriptsize$\pm$ 1182.74}   & 4068         & 12063.320     \\ 
\multicolumn{1}{c|}{                   }    & DCILP-dagma (MB*)   & 0.938    	& 0.250   	& 879 {\scriptsize$\pm$ 234.64}   	& 2994         & 4104.407      \\ 
\multicolumn{1}{c|}{                   }    & DCILP-dagma         & 0.923    	& 0.250   	& 914 {\scriptsize$\pm$ 191.78}   	& 2932         & 3405.801      \\ 
\hline
\end{tabular}
\end{table}

\subsection{Complementary results on the MUNIN dataset }
\label{ssec-app:expe-munin}

The comparative results displayed in \Cref{fig-app:munin-u5} and \Cref{fig-app:munin-u10} illustrate the impact of the $n/d$ ratio under the three different SEM noise types (Gaussian, Gumbel, Uniform). All reported results correspond to the hyper-parameter configuration selected with respect to the lowest DAGness value and the lowest number of edges among the final solutions; see \Cref{ssec:ilp-formu}. 

\begin{figure}[htpb]
           \centering
  \subfigure[Gaussian noise]{\includegraphics[width=.30\textwidth]{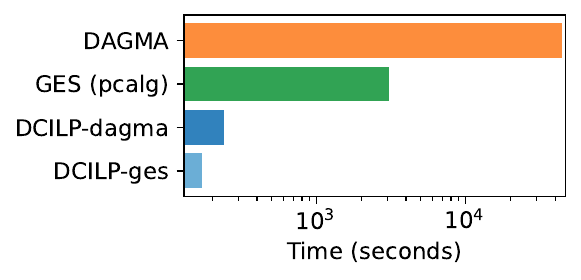} 
   \includegraphics[width=.30\textwidth]{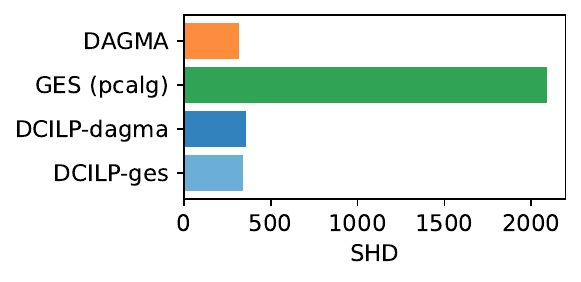}
   \includegraphics[width=.30\textwidth]{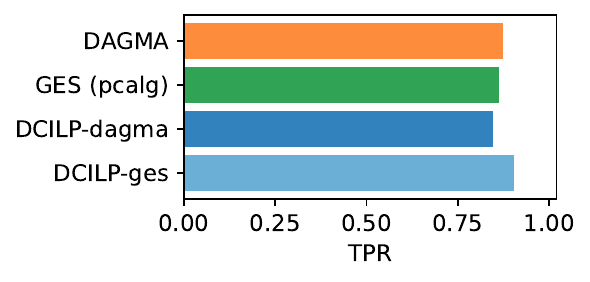}
   }
\subfigure[Gumbel noise]{\includegraphics[width=.30\textwidth]{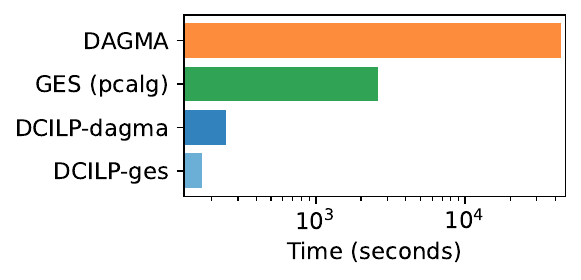} 
   \includegraphics[width=.30\textwidth]{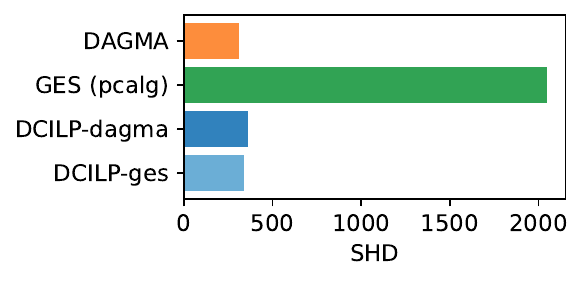}
   \includegraphics[width=.30\textwidth]{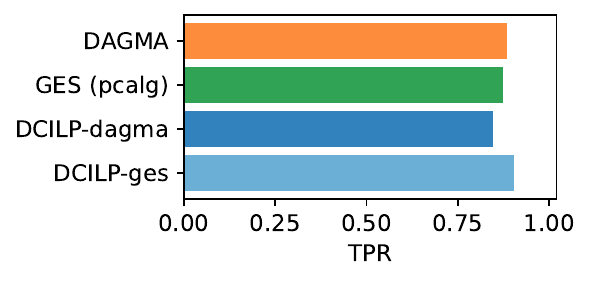}
   }
  \subfigure[Uniform noise]{\includegraphics[width=.30\textwidth]{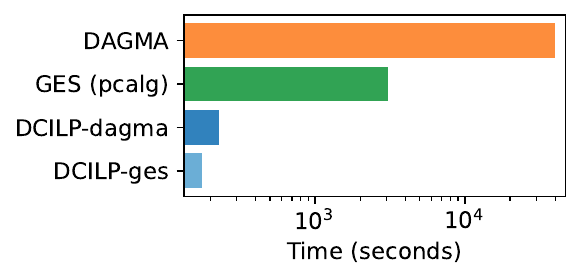} 
   \includegraphics[width=.30\textwidth]{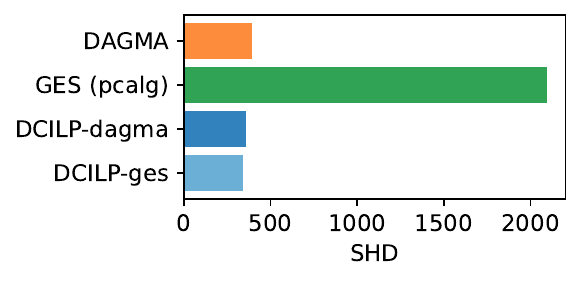}
   \includegraphics[width=.30\textwidth]{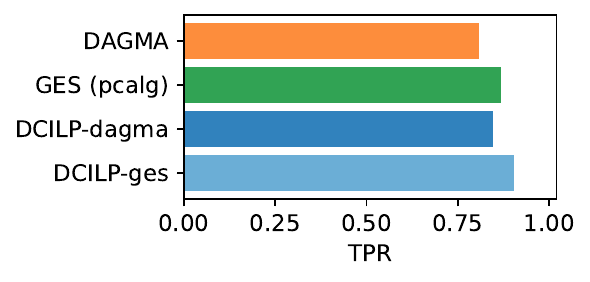}
   }
\caption{Comparative results on MUNIN linear SEM with Gaussian (a), Gumbel (b) and Uniform (c) noise distributions. The ratio $n/d$ is 5.}
        \label{fig-app:munin-u5}
\end{figure}

\begin{figure}[htpb]
           \centering
  \subfigure[Gaussian
  noise]{\includegraphics[width=.30\textwidth]{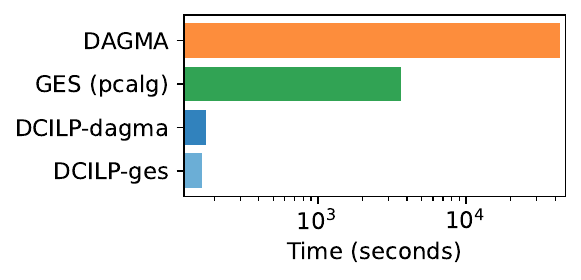} 
   \includegraphics[width=.30\textwidth]{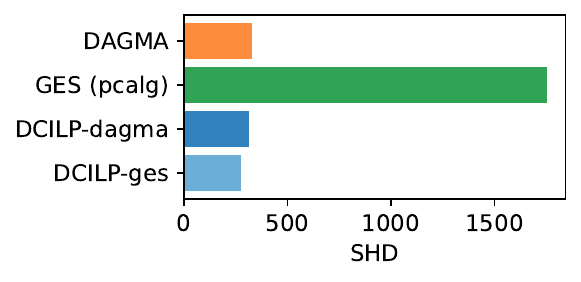}
   \includegraphics[width=.30\textwidth]{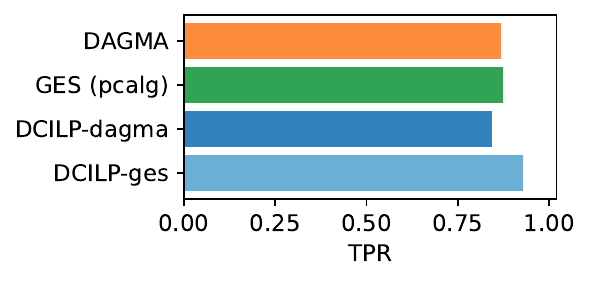}
   }
   \\
  \subfigure[Gumbel noise]{\includegraphics[width=.30\textwidth]{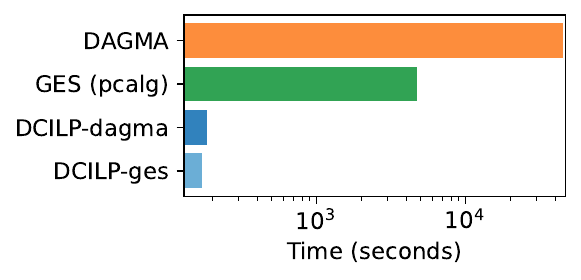} 
   \includegraphics[width=.30\textwidth]{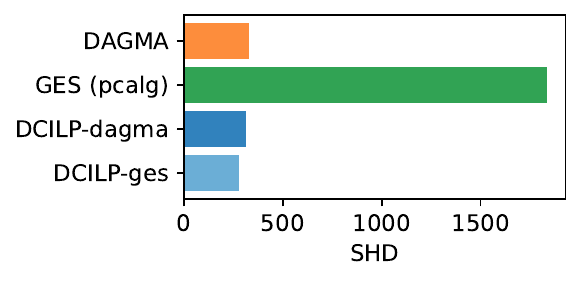}
   \includegraphics[width=.30\textwidth]{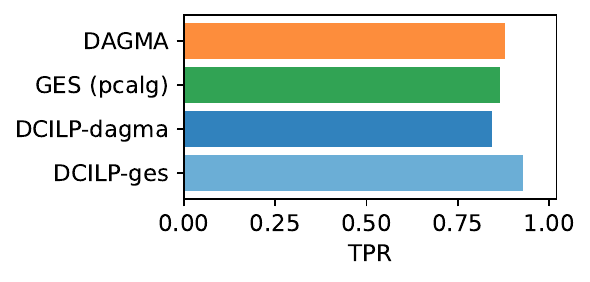}
   }
   \\
  \subfigure[Uniform noise]{\includegraphics[width=.30\textwidth]{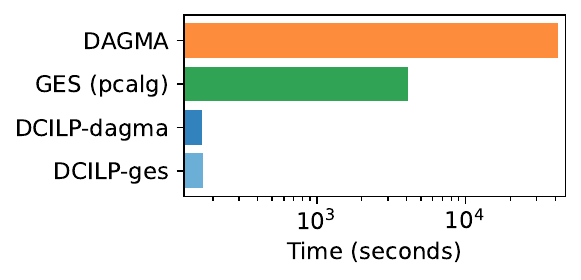} 
   \includegraphics[width=.30\textwidth]{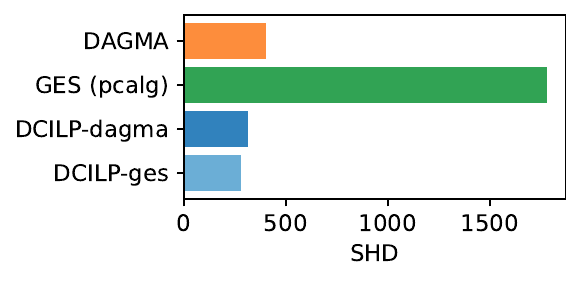}
   \includegraphics[width=.30\textwidth]{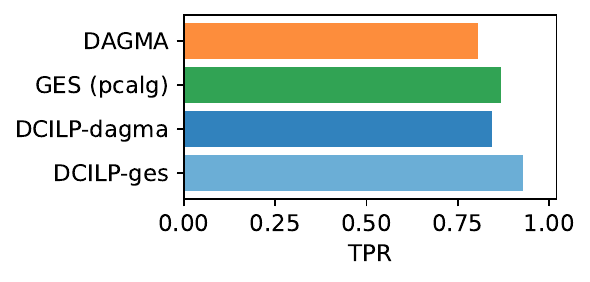}
   }
\caption{Comparative results on MUNIN linear SEM with Gaussian (a), Gumbel (b) and Uniform (c) noise distributions. The ratio $n/d$ is 10.}
\label{fig-app:munin-u10}
\end{figure}

\end{document}